\documentclass{article}

\usepackage[preprint]{neurips_2026}

\usepackage[utf8]{inputenc} 
\usepackage[T1]{fontenc}    
\usepackage{hyperref}       
\usepackage{graphicx}
\usepackage{url}            
\usepackage{booktabs}       
\usepackage{amsfonts}       
\usepackage{nicefrac}       
\usepackage{microtype}      
\usepackage{xcolor}         
\usepackage{amsmath, amssymb, amsthm}
\usepackage{cleveref}
\usepackage{subcaption}
\usepackage{algorithm}
\usepackage{algpseudocode}
\usepackage{wrapfig}

\title{Zero-Flow Two-Sample Tests}

\author{%
  Yakun Wang \\
  University of Bristol\\
  \texttt{yakun.wang@bristol.ac.uk} \\
  \And
  Leyang Wang \\
  RIKEN AIP \\
  \texttt{leyang.wang@riken.jp} \\
  \AND
  Song Liu \\
  University of Bristol\\
  \texttt{song.liu@bristol.ac.uk} \\
  \And
  Taiji Suzuki \\
  University of Tokyo  \& RIKEN AIP \\
  \texttt{
taiji@mist.i.u-tokyo.ac.jp} \\
}

\newcommand{\R}{\mathbb{R}}
\newcommand{\E}{\mathbb{E}}
\newcommand{\Ehat}{\widehat{\E}} 

\newcommand{\ZFD}{\textsc{ZFD}}
\newcommand{\ZFDhat}{\widehat{\ZFD}}

\newcommand{\rmtr}{\mathrm{tr}}
\newcommand{\rmte}{\mathrm{te}}

\newcommand{\bzero}{\mathbf{0}}

\newcommand{\be}{\mathbf{e}}
\newcommand{\bm}{\mathbf{m}}

\newcommand{\bt}{\mathbf{t}}
\newcommand{\bv}{\mathbf{v}}
\newcommand{\bu}{\mathbf{u}}

\newcommand{\bx}{\mathbf{x}}
\newcommand{\by}{\mathbf{y}}
\newcommand{\bz}{\mathbf{z}}

\newcommand{\bD}{\mathbf{D}}
\newcommand{\bM}{\mathbf{M}}
\newcommand{\bX}{\mathbf{X}}
\newcommand{\bY}{\mathbf{Y}}

\newcommand{\bracket}[1]{\left [ #1 \right ]}

\newcommand{\normreg}[1]{\left \lVert  #1 \right \rVert}
\newcommand{\inpro}[2]{\left \langle #1, #2 \right \rangle}

\theoremstyle{plain}
\newtheorem{theorem}{Theorem}[section]
\newtheorem{proposition}{Proposition}[section]
\newtheorem{lemma}{Lemma}[section]

\theoremstyle{definition}
\newtheorem{definition}{Definition}[section]

\theoremstyle{remark}

\crefname{appendix}{Appendix}{Appendices}
\Crefname{appendix}{Appendix}{Appendices}

\begin{document}

\maketitle

\begin{abstract}
We propose a new approach to two-sample testing for deciding whether two sets of samples are drawn from the same distribution.
The test is built on a statistical discrepancy based on the zero-flow criterion, termed zero-flow discrepancy (ZFD).
We prove the validity of ZFD and propose a practical testing procedure, termed the zero-flow two-sample test (ZF2ST). The key idea is to learn how samples from the two distributions are locally misaligned and use the resulting directional pattern as evidence of distributional difference. By separating witness learning from hypothesis evaluation, ZF2ST can use flexible neural networks while maintaining valid statistical calibration. We develop both regression-based and power-maximized approaches for learning the witness. Experiments on synthetic and image datasets demonstrate that ZF2ST can achieve strong testing power for structured distributional changes while maintaining well-calibrated type-I error.
\end{abstract}

\section{Introduction}
Two-sample testing is a fundamental problem in statistics and machine
learning: given observations from two unknown distributions, the goal is to
determine whether they arise from the same population. 
It underlies applications including model criticism \citep{sutherland2016generative}, dataset
comparison \citep{gretton2012kernel}, and distribution-shift detection \citep{rabanser2019failing}.
Classical nonparametric tests
compare expectations over a class of test functions. Kernel methods such as
the maximum mean discrepancy compare mean embeddings in a reproducing kernel
Hilbert space \citep{gretton2012kernel}, while more recent methods improve
testing power by learning test locations, kernels, deep representations, or
witness functions
\citep{jitkrittum2016interpretable,sutherland2016generative,
liu2020learning,kubler2022witness}.

A central challenge in modern two-sample testing is to capture the structure of high-dimensional alternatives.
Even consistent nonparametric tests can have limited finite-sample
power in high dimensions, particularly when the discrepancy is localized or
concentrated in only a few informative directions
\citep{ramdas2015decreasing, jitkrittum2016interpretable}.
Learning a witness adapted to the alternative can therefore be as important as designing the final test statistic.
Flow-based models have achieved remarkable success in capturing fine-grained features of complex data distributions \citep{lipmanflow,liu2022flow,albergo2025stochastic}. 
Their representational power has also been leveraged beyond sample generation
for statistical tasks such as missing-data imputation
\citep{yu2025missing} and mutual-information estimation
\citep{butakov2025fmmi}. Motivated by these developments, we explore
their use in two-sample testing. Rather than using a flow model to
synthesize additional observations, we use its learned vector field as
a structured witness for describing and testing how two distributions
differ.

Our approach builds on the zero-flow criterion
\citep{wang2026zero}. We independently pair observations from the two
distributions and consider the midpoint and displacement of each pair.
Conditional on the midpoint, the average displacement defines a vector field
that vanishes exactly when the two distributions are equal. A nonzero field
therefore provides a vector-valued witness of distributional mismatch. Its
magnitude indicates where the discrepancy is pronounced, while its direction
describes how probability mass is locally misaligned.

Directly estimating the squared magnitude of this field is not well suited to
hypothesis testing. Under the null, finite-sample estimation error produces a
nonzero learned field, and squaring it introduces a positive nuisance term
with a generally nonstandard null distribution. We instead separate witness
learning from hypothesis evaluation. A witness is learned on one data split
and evaluated on an independent split through its alignment with held-out
displacements. Conditional on the training data, the witness is fixed and the
held-out alignment scores are centered under the null. Their studentized
average consequently admits standard asymptotic Gaussian calibration. We call
the resulting procedure the zero-flow two-sample test (ZF2ST).

This sample-split construction decouples type-I error control from testing power.
Under nondegeneracy conditions, the validity of the held-out calibration relies on the independence between witness learning and evaluation, rather than on the accuracy of the learned witness.
An inaccurate or misspecified witness may reduce power but does not alter the null centering of the held-out statistic.
For witness learning, we consider both regression toward the midpoint velocity and a power-oriented objective that directly maximizes the signal-to-noise ratio of the held-out statistic.
The latter allows the model to focus on directions that provide stable evidence against the null rather than reconstructing the entire flow.

Our contributions are summarized as follows:
\begin{itemize}

    \item We formulate the zero-flow midpoint velocity as a vector-valued
    population witness for two-sample testing and characterize its
    distributional and geometric meaning.
    \item We propose ZF2ST, a sample-split test based on held-out witness
    alignment, and establish standard asymptotic null calibration for the final statistic.
    \item We establish the power analysis of ZF2ST in terms of the signal-to-noise ratio and propose a corresponding power-optimized witness-learning scheme.

\end{itemize}

\section{Background}
\label{sec:setup}
\textbf{Two-sample testing.}
Let \(P\) and \(Q\) be Borel probability measures on \(\mathbb R^d\). We observe two independent samples \(S_X=\{\bx_i\}_{i=1}^n\overset{\mathrm{i.i.d.}}{\sim}P\) and \(S_Y=\{\by_j\}_{j=1}^m\overset{\mathrm{i.i.d.}}{\sim}Q\). The goal of two-sample testing is to determine whether the two samples are drawn from the same distribution, formalized as testing the null hypothesis \(H_0:P=Q\) against the alternative \(H_1:P\neq Q\) at a prescribed significance level \(\alpha\in(0,1)\).

A two-sample test typically proceeds in three steps. First, one computes a test statistic \(\widehat Z=\widehat Z(S_X,S_Y)\), where larger values indicate stronger evidence against \(H_0\). Second, one calibrates the null distribution of \(\widehat Z\), for example by bootstrap, permutation, or asymptotic theory \citep{efron1985bootstrap,good2013permutation,gretton2012kernel}. Let \(Z_0\) denote a random variable following the null distribution of the statistic. The right-tail \(p\)-value is then defined as \(p_{\mathrm{val}}=\Pr_{H_0}(Z_0\geq \widehat Z)\). Finally, the test rejects \(H_0\) at significance level \(\alpha\) whenever \(p_{\mathrm{val}} \leq \alpha\).

\textbf{Flow matching and midpoint conditional velocity.}
Flow Matching (FM; \citealp{lipmanflow, liu2022flow}) learns a velocity field along a linear interpolation between two independent random variables. Given \(X\sim P\) and \(Y\sim Q\), define \(X_t=(1-t)X+tY\) for \(t\in[0,1]\). FM fits a time-dependent velocity field by solving
\[
\bv:=\underset{\bu}{\arg \min}\, \E_{t,X,Y}\|Y-X-\bu(X_t;t)\|^2,\quad \text{equivalently},\quad
\bv(\bz;t)=\E[Y-X\mid X_t=\bz]
\]
At the midpoint \(t=1/2\), this becomes \(\bv(\bz;1/2)=\E[Y-X\mid (X+Y)/2=\bz]\), a conditional expectation of the displacement given the midpoint.

\textbf{Zero-flow criterion.}
Recently, a key property of the population FM velocity at the midpoint was established by \citet{wang2026zero}: the midpoint velocity vanishes if and only if the two endpoint distributions coincide. Thus, a nonzero midpoint velocity witnesses a distributional mismatch.
\begin{proposition}[Zero-flow condition; Theorem 3.1 in \citet{wang2026zero}] 
\label{prop:zf_criterion}
Let \(X\sim P\) and \(Y\sim Q\) be independent random variables with finite second moments. Then $\bv(\bz;1/2)=\bzero, \forall \bz$ if and only if $P=Q$. 
\end{proposition}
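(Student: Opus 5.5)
The direction $P=Q\Rightarrow \bv(\cdot;1/2)=\bzero$ will follow from a symmetry argument. If $P=Q$, the pair $(X,Y)$ is exchangeable, so $(X+Y)/2$ and $Y-X$ have the same joint law as $(X+Y)/2$ and $X-Y$. Hence $\E[Y-X\mid (X+Y)/2=\bz]=\E[X-Y\mid (X+Y)/2=\bz]$ for almost every $\bz$ under the midpoint law, and this common value must be $\bzero$. Finite second moments make the conditional expectation well defined in $L^2$.

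For the converse I will pass to Fourier transforms. Write $M=(X+Y)/2$ and $D=Y-X$. The hypothesis $\E[D\mid M]=\bzero$ a.s. is equivalent to $\E[D\,g(M)]=\bzero$ for every bounded measurable $g$. I will take $g(\bz)=e^{i\langle \bt,\bz\rangle}$ and obtain, for all $\bt\in\R^d$,
\[
\E\bigl[(Y-X)\,e^{i\langle \bt,X+Y\rangle/2}\bigr]=\bzero .
\]
By independence the left-hand side factorizes. Let $\varphi_P,\varphi_Q$ be the characteristic functions. Since $\E\|X\|<\infty$, $\nabla\varphi_P(\bs)=i\E[X e^{i\langle \bs,X\rangle}]$, and the same holds for $Q$. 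At $\bs=\bt/2$ the identity becomes
\[
\varphi_P(\bs)\,\nabla\varphi_Q(\bs)-\varphi_Q(\bs)\,\nabla\varphi_P(\bs)=\bzero\qquad\forall \bs\in\R^d .
\]

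The main step is to integrate this Wronskian-type identity to conclude $\varphi_P\equiv\varphi_Q$. On the connected open set $U$ containing $\bzero$ where $\varphi_P\neq0$, I will compute $\nabla(\varphi_Q/\varphi_P)=\bzero$. So $\varphi_Q/\varphi_P$ is constant on $U$, and evaluating at $\bzero$ gives the constant $1$; thus $\varphi_Q=\varphi_P$ on $U$.

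Characteristic functions may vanish, so I must extend the equality past the boundary of $U$, and this is the step I expect to be the main obstacle. I would first try the following argument. Let $\bs_0$ be any point and join it to $\bzero$ by a ray $\gamma(r)=r\bs_0/\|\bs_0\|$. Set $f(r)=\varphi_P(\gamma(r))$ and $h(r)=\varphi_Q(\gamma(r))$; these are $C^1$ functions satisfying $f h'-h f'=0$ and $f(0)=h(0)=1$. Consider the first $r_*$ at which $f$ or $h$ vanishes. By continuity $f=h$ on $[0,r_*]$, so both vanish there simultaneously.

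To continue past $r_*$, I would pick the symmetric combination $u=f-h$, which satisfies $f u'-u f'=0$. I would then try to show that $u$ stays identically zero by a Gronwall or uniqueness argument for the linear ODE $u'=(f'/f)u$ near zeros of $f$. This may fail if $f$ vanishes on an interval, in which case $h$ is unconstrained there.

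If this approach breaks down, I will fall back on the argument used in \citet{wang2026zero}, which I would take to strengthen the hypothesis via the $L^2$ structure. Concretely, I would test with $g(M)=\langle \bt', M\rangle e^{i\langle\bt,M\rangle}$ and similar functions to also gain control of second-derivative identities. Alternatively, I would note that the identity is equivalent to $\nabla\varphi_Q\,\overline{\varphi_P}\in$ span of $\nabla\varphi_P\,\overline{\varphi_P}$ and then apply a uniqueness argument for the distributional equation.

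Once $\varphi_P=\varphi_Q$ everywhere, Lévy's uniqueness theorem gives $P=Q$.
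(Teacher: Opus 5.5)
Your proof of $P=Q\Rightarrow\bv(\cdot;1/2)=\bzero$ is correct. Your reduction of the converse to $\varphi_P\nabla\varphi_Q=\varphi_Q\nabla\varphi_P$ on $\R^d$ is also correct. In fact this identity is \emph{equivalent} to $\E[D\mid M]=\bzero$, because $-i(\varphi_P\nabla\varphi_Q-\varphi_Q\nabla\varphi_P)(\bt/2)$ is the Fourier transform of the finite vector measure $\E[D\mid M=\bz]\,P_M(d\bz)$.

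The gap is exactly the step you flagged, and it cannot be closed. Where a characteristic function vanishes on an open set, the identity imposes nothing there, and the proposition as stated (finite second moments only) is false. Take $d=1$ and let $P$ have density $f\propto(\sin(x/2)/x)^4$, which has finite second moment and Fourier transform supported in $[-2,2]$. Let $Q$ have density $q(x)=f(x)(1+\cos 5x)$. Then $\varphi_Q(t)=\varphi_P(t)+\tfrac12\varphi_P(t-5)+\tfrac12\varphi_P(t+5)$ equals $\varphi_P$ on $(-3,3)$, while $\varphi_P$ and $\varphi_P'$ vanish on $\{|t|>2\}$. So your Wronskian is identically zero, although $\varphi_Q(5)=\tfrac12\neq0=\varphi_P(5)$.

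You can also check this directly. $\bv(z)$ is proportional to $\int(y-z)\,q(y)\,f(2z-y)\,dy$. The contribution of $f(y)$ vanishes because $y\mapsto(y-z)f(y)f(2z-y)$ is odd about $z$. The $\cos 5y$ contribution is the Fourier transform at $\pm5$ of that same function, whose spectrum lies in $[-4,4]$, so it is zero. Since $M$ has an everywhere positive density, $\bv(z)=0$ for every $z$, yet $P\neq Q$. Appending common i.i.d. coordinates gives counterexamples in every $d$.

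So your ray/Gronwall continuation must fail: here $\varphi_P$ vanishes on whole intervals, precisely the case you worried about. Your fallbacks cannot rescue it either. Testing against $\langle\bt',M\rangle e^{i\langle\bt,M\rangle}$ only produces $\bt$-derivatives of the identity you already have, and that identity already encodes all of $\E[D\mid M]=\bzero$.

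What your argument does prove is $\varphi_P=\varphi_Q$ on the connected component of $\{\varphi_P\neq0\}$ containing $\bzero$, hence on a neighbourhood of the origin. This gives $P=Q$ only under an additional hypothesis, for instance:
(a) $\varphi_P$ (or $\varphi_Q$) has no zeros, so that this component is all of $\R^d$; or
(b) one of the laws, say $P$, has a finite moment generating function near $\bzero$. Then $\varphi_Q$ coincides near $\bzero$ with a $C^\infty$ function, so $Q$ has all moments and they equal those of $P$; moment determinacy of $P$ then yields $Q=P$.

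The paper gives no proof of its own here, only the citation to \citet{wang2026zero}, so some such assumption has to be added to the statement. It also affects the identification claim in \Cref{prop:zfd-properties}, since the example above has $\ZFD(P,Q)=0$. The null-calibration results, by contrast, only use the direction you proved.
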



\section{Zero-Flow Discrepancy}
\subsection{Definition and basic properties}
We first turn the zero-flow criterion in \Cref{prop:zf_criterion} into a valid statistical discrepancy. Throughout the paper, let \(X\sim P\) and \(Y\sim Q\) be independent random vectors taking values in \(\mathbb R^d\). We denote by \(D:=Y-X\) and \(M:=(X+Y)/2\) the displacement and midpoint between \(X\) and \(Y\), respectively.
The midpoint velocity is denoted by $\bv(\bm):=\mathbb E[D\mid M=\bm]$; we suppress its argument when no confusion arises.
Hats, as in \(\widehat{\E}\), \(\widehat{\operatorname{Var}}\), and \(\widehat{\bu}\), denote empirical or data-dependent quantities, while superscripts \({\rm tr}\) and \({\rm te}\) denote the training and evaluation splits, respectively.
\begin{definition}[Zero-Flow Discrepancy]
\label{def:zfd}
Let \(X\sim P\) and \(Y\sim Q\) be independent random variables on \(\mathbb R^d\) with finite second moments. Define the population midpoint velocity by \(\bv(\bm):=\E[D\mid M=\bm]\).
The \emph{Zero-Flow Discrepancy} between \(P\) and \(Q\) is
\begin{equation}
    \label{eq:zfd}
    \ZFD(P,Q):=\E\!\left[\normreg{\bv(M)}^2\right].
\end{equation}
\end{definition}
Unlike FM, which learns a full time-dependent velocity field, ZFD only uses the midpoint conditional velocity \(\bv(\bm)=\E[D\mid M=\bm]\). This conditional form naturally leads to a regression-based estimator and can be parameterized by flexible function classes, such as neural networks or an RKHS. The midpoint velocity also reflects a local cancellation effect: conditional on a fixed midpoint \(\bm\), opposite displacement directions cancel when \(P=Q\), yielding \(\E[D\mid M=\bm]=\bzero\). Therefore, a nonzero value of \(\bv(\bm)\) indicates a directional imbalance between \(P\) and \(Q\) at that midpoint. ZFD aggregates this imbalance by averaging the squared magnitude of the midpoint velocity over the midpoint distribution.
\begin{proposition}[Basic Properties of ZFD]
\label{prop:zfd-properties}
Let $P$ and $Q$ be probability distributions on $\mathbb{R}^d$ with finite second moments. The Zero-Flow Discrepancy satisfies the following properties:
\begin{enumerate}
    \item \textbf{(Non-negativity and identification)} $\mathrm{ZFD}(P, Q) \geq 0$, with equality if and only if $P = Q$.
    \item \textbf{(Symmetry)} $\mathrm{ZFD}(P, Q) = \mathrm{ZFD}(Q, P)$.
\end{enumerate}
\end{proposition}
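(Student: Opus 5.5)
Both properties follow almost directly from the definition together with the zero-flow criterion of \Cref{prop:zf_criterion}. The plan is to handle non-negativity and identification first, then symmetry by a change of variables that swaps the roles of \(X\) and \(Y\).

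For non-negativity, \(\normreg{\bv(M)}^2\ge 0\) pointwise, so its expectation is non-negative. Finiteness also needs a word. Since \(X,Y\) have finite second moments, \(D\) is square integrable. Conditional Jensen then gives \(\E\normreg{\bv(M)}^2\le \E\normreg{D}^2<\infty\), so \(\ZFD(P,Q)\) is a well-defined finite number.

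For identification, note that \(\ZFD(P,Q)=0\) holds iff \(\bv(M)=\bzero\) almost surely, i.e.\ iff \(\bv(\bm)=\bzero\) for \(\mu_M\)-almost every \(\bm\), where \(\mu_M\) is the law of \(M\). The conditional expectation \(\bv\) is only defined up to \(\mu_M\)-null sets. The statement ``\(\bv(\bz;1/2)=\bzero\) for all \(\bz\)'' in \Cref{prop:zf_criterion} is therefore to be read as ``for \(\mu_M\)-a.e.\ \(\bz\)'', since \(\bv\) can be modified freely off the support of \(M\). With this reading the two conditions coincide.
\begin{itemize}
\item If \(P=Q\), \Cref{prop:zf_criterion} gives \(\bv\equiv\bzero\) (a.e.), hence \(\ZFD=0\).
\item Conversely, if \(\ZFD=0\), then \(\bv=\bzero\) \(\mu_M\)-a.e. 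Choosing the version that is identically zero, \Cref{prop:zf_criterion} yields \(P=Q\).
\end{itemize}
The only delicate point of the whole proof is this ``all \(\bz\)'' versus ``almost every \(\bz\)'' issue. I expect it to be resolved simply by the fact that \(\bv\) is an equivalence class, and I would state this explicitly.

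For symmetry, let \(\bv_{P,Q}\) and \(\bv_{Q,P}\) denote the midpoint velocities with the roles of the distributions exchanged. Let \(X'\sim Q\) and \(Y'\sim P\) be independent; realize them as \(X'=Y\), \(Y'=X\). Then the midpoint is unchanged, \(M'=(X'+Y')/2=M\), and the displacement flips sign, \(D'=Y'-X'=-D\). Hence
\[
\bv_{Q,P}(\bm)=\E[-D\mid M=\bm]=-\bv_{P,Q}(\bm)
\]
\(\mu_M\)-a.e. Taking squared norms and expectations over the same law of \(M\) gives \(\ZFD(Q,P)=\ZFD(P,Q)\). Here it matters that the ZFD depends only on the joint law of \((M,D)\), so this coupling-based realization is legitimate.
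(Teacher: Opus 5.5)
Your proof is correct and follows the same route as the paper. Non-negativity comes from the squared norm. Identification reduces $\ZFD(P,Q)=0$ to $\E[D\mid M]=\bzero$ almost everywhere and then invokes the zero-flow criterion. Symmetry uses the swap $M'=M$, $D'=-D$ with the midpoint law unchanged. Your extra remarks on finiteness via conditional Jensen, and on reading the zero-flow criterion modulo $\mu_M$-null sets, are sound refinements that the paper leaves implicit.
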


A proof can be found in Appendix \ref{app:prf_zfdproperties}.

\section{Zero-Flow Two-Sample Test}
\label{sec:zftest}
Since \Cref{prop:zfd-properties} shows that ZFD is a valid population discrepancy, a natural approach is to use its empirical estimation \(\ZFDhat(P,Q):=\Ehat[||\hat\bv(\tfrac{X+Y}{2})||^2]\) as a test statistic.
However, this empirical form is a quadratic plug-in statistic in terms of an estimated velocity $\hat{\bv}$, which is statistically inconvenient under the null. When \(H_0:P=Q\), the population midpoint velocity satisfies \(\bv\equiv\bzero\), but  \(\widehat\bv\) does not vanish exactly due to finite-sample estimation error. Squaring the estimated field therefore creates a positive nuisance contribution under \(H_0\), leading to a biased and nonstandard null distribution. We further discuss the potential pitfalls of using empirical ZFD in Appendix \ref{app:pitfalls}.

To alleviate this problem, we propose a scale-invariant dual form of ZFD, which shifts the ZFD signal from the magnitude of a fitted velocity field \(\hat{\bv}\) to its direction. This follows the same principle as MMD, where the discrepancy is obtained by optimizing witness alignment over the RKHS unit ball \citep{gretton2012kernel}.

\subsection{A scale-invariant dual form of ZFD}
Let \(P_M\) denote the distribution of the midpoint \(M\), and let
\(L^2(P_M;\R^d):=\{\bu:\R^d\to\R^d:\E[\normreg{\bu(M)}^2]<\infty\}\)
be the space of square-integrable vector fields on the midpoint space.

\begin{proposition}[Scale-invariant dual representation of ZFD]
\label{prop:dual_form}
Let \(X\sim P\) and \(Y\sim Q\) be independent random variables on \(\R^d\) with finite second moments. Then
\begin{equation}
\label{eq:cs_type}
    \ZFD(P,Q)
    =
    \sup_{{
        \bu\in L^2(P_M;\R^d);
        \E[\normreg{\bu(M)}^2]>0
    }}
    \frac{
        \left(\E\inpro{\bu(M)}{D}\right)^2
    }{
        \E\bracket{\normreg{\bu(M)}^2}
    } .
\end{equation}
The supremum is attained by any nonzero rescaling of the midpoint velocity, \(\bu=c\bv\) with \(c\neq 0\), whenever \(\ZFD(P,Q)>0\). When \(\ZFD(P,Q)=0\), every admissible \(\bu\) attains value zero.
\end{proposition}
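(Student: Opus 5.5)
The plan is to reduce the numerator to an inner product in \(L^2(P_M;\R^d)\) via the tower property, and then apply Cauchy--Schwarz.

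\textbf{Step 1: tower property.} Fix an admissible \(\bu\). The integrability is routine: \(D\) has a finite second moment because \(X\) and \(Y\) do, and \(\bu(M)\in L^2\), so \(\inpro{\bu(M)}{D}\) is integrable. Since \(\bu(M)\) is \(M\)-measurable, conditioning on \(M\) gives
\[
\E\inpro{\bu(M)}{D}=\E\inpro{\bu(M)}{\E[D\mid M]}=\E\inpro{\bu(M)}{\bv(M)}.
\]
By conditional Jensen, \(\E\normreg{\bv(M)}^2\le \E\normreg{D}^2<\infty\). Hence \(\bv\in L^2(P_M;\R^d)\), and \(\ZFD(P,Q)=\normreg{\bv}_{L^2(P_M)}^2\) is finite.

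\textbf{Step 2: upper bound.} Cauchy--Schwarz in \(L^2(P_M;\R^d)\) gives
\[
\left(\E\inpro{\bu(M)}{\bv(M)}\right)^2\le \E\normreg{\bu(M)}^2\,\E\normreg{\bv(M)}^2 .
\]
Dividing by \(\E\normreg{\bu(M)}^2>0\) shows that every admissible ratio is at most \(\ZFD(P,Q)\).

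\textbf{Step 3: attainment.} Suppose \(\ZFD(P,Q)>0\). Take \(\bu=c\bv\) with \(c\neq0\); it is admissible because \(\E\normreg{c\bv(M)}^2=c^2\ZFD>0\). Its ratio is
\[
\frac{c^2\left(\E\normreg{\bv(M)}^2\right)^2}{c^2\,\E\normreg{\bv(M)}^2}=\ZFD(P,Q),
\]
so the supremum equals \(\ZFD\) and is attained by every such rescaling.

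Now suppose \(\ZFD(P,Q)=0\). Then \(\bv(M)=\bzero\) almost surely, so by Step 1 the numerator \(\E\inpro{\bu(M)}{\bv(M)}\) is zero for every admissible \(\bu\). Every admissible \(\bu\) therefore attains value zero, and so does the supremum. This is consistent with \Cref{prop:zfd-properties}, since here \(P=Q\).

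\textbf{Main obstacle.} There is no serious obstacle. The only care needed is the measure-theoretic bookkeeping: \(\bv\) is defined only \(P_M\)-a.s., and the integrability must be checked so that the tower property applies. Both points are handled by the finite-second-moment assumption.
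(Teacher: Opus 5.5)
Your proposal is correct and follows the paper's proof essentially step for step: the tower identity (\Cref{lem:tower}), then Cauchy--Schwarz in $L^2(P_M;\R^d)$, then attainment at $\bu=c\bv$, then the vanishing numerator when $\bv(M)=\bzero$ a.e. Your extra check via conditional Jensen that $\bv\in L^2(P_M;\R^d)$ is something the paper leaves implicit. It is a useful addition, because it is exactly what makes $c\bv$ an admissible witness and $\ZFD(P,Q)$ finite.
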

The proof is deferred to Appendix~\ref{app:pf_dual}. This representation gives a directional interpretation of ZFD: the dual objective evaluates the alignment between a vector field \(\bu\) and the midpoint velocity \(\bv\), while being invariant to the scale of \(\bu\). Thus, for testing, the relevant quantity is not the magnitude of a fitted field but its alignment signal
\(\E\langle \bu(M),D\rangle\), whose impact on power is analyzed in \Cref{subsec:asymp_power}.



\subsection{Zero-Flow Two-Sample Test (ZF2ST)}

Let \(\bu\in L^2(P_M;\mathbb R^d)\) be a fixed witness field. We define the signed zero-flow alignment as the unsquared term appearing in the numerator of the dual objective \eqref{eq:cs_type}
\begin{equation}
\label{eq:cross_term}
T(P,Q;\bu) := \E\left[\langle \bu(M),D\rangle\right] = \E\left[\langle \bu(M),\bv(M)\rangle\right],
\end{equation}
where the second equality follows from \Cref{lem:tower}.
Thus the witness \(\bu\) specifies a probing direction, and \(T(P,Q;\bu)\) measures the signed expected alignment between $\bu$ and the true midpoint velocity \(\bv\). Note that, when the witness field equals the true midpoint velocity, the alignment recovers the ZFD $T(P,Q;\bv) = \ZFD(P,Q)$.
\textbf{Construction of the test statistic.}
The zero-flow alignment \eqref{eq:cross_term} naturally suggests a sample-splitting witness-learning testing scheme: we first learn $\bu$ on one split, then evaluate and studentize $T(P,Q; \bu)$ on an independent split \citep{kubler2022witness}.
Let \(S_X=\{\bx_i\}_{i=1}^{n}\) and \(S_Y=\{\by_i\}_{i=1}^{m}\) be two independent samples from \(P\) and \(Q\), respectively. 
We split both samples into training and testing folds,
\(S_X=S_X^{\rm tr}\cup S_X^{\rm te}\) and 
\(S_Y=S_Y^{\rm tr}\cup S_Y^{\rm te}\).
Throughout the paper, we will assume $n = m$.

On the training folds, we learn a witness direction \(\bu^{\rmtr}\) using only \((S_X^{\rmtr},S_Y^{\rmtr})\). Details on how to learn the witness field \(\bu^{\rmtr}\) are provided in \Cref{subsec:learning_u}. With the learned witness \(\bu^{\rmtr}\), we evaluate the test statistic on an independent held-out split. Let \(N_{\rmte}\) be the number of evaluation pairs, and define \(M_i^{\rmte}=(X_i^{\rmte}+Y_i^{\rmte})/2\) and \(D_i^{\rmte}=Y_i^{\rmte}-X_i^{\rmte}\) for \(i=1,\ldots,N_{\rmte}\).
Denote by $\widehat T_{\rmte}(\bu^{\rmtr})$ the empirical mean of the held-out alignment scores $t^{\rmte}(\bu^{\rmtr})$, and by $\widehat\sigma_{\rmte}^2(\bu^{\rmtr})$ their sample variance. The corresponding population variance is $\sigma^2_{\rmte}(\bu^{\rmtr}):=\operatorname{Var}(t^{\rmte}(\bu^{\rmtr}))$:
\footnote{For brevity, we omit the explicit dependence on \(\bu^{\rmtr}\) when no confusion arises.}
\begin{equation}
\label{eq:linear_zfd_stat}
    \widehat T_{\rmte}
    :=
    \frac{1}{N_{\rmte}}
    \sum_{i=1}^{N_{\rmte}}t_i^{\rmte},
    \quad
    t_i^{\rmte}:=\langle \bu^{\rmtr}(M_i^{\rmte}),D_i^{\rmte}\rangle, 
    \quad
    \widehat\sigma_{\rmte}^2
    :=
    \frac{1}{N_{\rmte}-1}
    \sum_{i=1}^{N_{\rmte}}
    \left(
        t_i^{\rmte}
        -
        \widehat T_{\rmte}
    \right)^2 .
\end{equation}
The final ZF2ST statistic is the studentized empirical mean on the test fold:
\begin{equation}
\label{eq:studentized_zfd}
    \widehat Z_{\mathrm{ZF}}
    :=
    \frac{
        \sqrt{N_{\rmte}}\widehat T_{\rmte}
    }{
        \widehat\sigma_{\rmte}
    } .
\end{equation}

We summarize the ZF2ST procedure in \Cref{alg:zf2st}.

\subsection{Asymptotic distributions and testing power}
\label{subsec:asymp_power}
We next study the null calibration and testing power of the ZF2ST statistic \(\widehat{Z}_{\mathrm{ZF}}\). 
Our analysis follows the sample-splitting framework of \citet{kubler2022witness}: conditional on the training split, the learned witness is fixed, and the held-out statistic is an empirical mean of scalar alignment scores. Therefore, its null distribution and power can be derived from the same empirical-mean CLT argument \citep[Sec. 1.9]{serfling2009approximation}.
\begin{theorem}[Asymptotic distribution of ZF2ST]
\label{thm:zf2st_asymp}
For a learned field $\bu^{\rmtr} \in L_2(P_M; \R^d)$, let $\sigma^2_{\rmte}:=\operatorname{Var}(t^{\rmte})$ be the variance of the held-out score $t^{\rmte}$ conditioned on $\bu^{\rmtr}$ in \eqref{eq:linear_zfd_stat} such that $0 < \sigma^2_{\rmte} < \infty$. Let $\widehat{\sigma}^2_{\rmte}$ be the corresponding sample variance. Then, as $N_{\rmte} \to \infty$,
\[ \frac{ \sqrt{N_{\rmte}} \left\{ \widehat T_{\rmte} - T(P,Q;\bu^{\rmtr}) \right\} }{ \widehat\sigma_{\rmte} } \xrightarrow{d} \mathcal N(0,1). \]
\end{theorem}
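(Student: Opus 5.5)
We argue conditionally on the training split throughout. Because $(S_X^{\rmtr},S_Y^{\rmtr})$ is independent of $(S_X^{\rmte},S_Y^{\rmte})$, once we condition on the training data the witness $\bu^{\rmtr}$ is a fixed deterministic element of $L^2(P_M;\R^d)$. The held-out pairs $(X_i^{\rmte},Y_i^{\rmte})$, $i=1,\dots,N_{\rmte}$, are i.i.d.\ copies of $(X,Y)$ with $X\perp Y$, so the scores $t_i^{\rmte}=\langle \bu^{\rmtr}(M_i^{\rmte}),D_i^{\rmte}\rangle$ are i.i.d.\ scalar random variables. By \eqref{eq:cross_term} their common mean is exactly $T(P,Q;\bu^{\rmtr})$. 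Their variance is $\sigma^2_{\rmte}$, which is finite and strictly positive by assumption.

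First, the mean is well defined. By Cauchy--Schwarz,
$\E|t^{\rmte}|\le (\E\|\bu^{\rmtr}(M)\|^2)^{1/2}(\E\|D\|^2)^{1/2}<\infty$,
since $\bu^{\rmtr}\in L^2(P_M;\R^d)$ and $P,Q$ have finite second moments. Finiteness of the second moment of $t^{\rmte}$ does not follow from these conditions alone; we take it from the hypothesis $\sigma^2_{\rmte}<\infty$.

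Second, the Lindeberg--L\'evy CLT applied to these i.i.d.\ scores gives
$\sqrt{N_{\rmte}}\{\widehat T_{\rmte}-T(P,Q;\bu^{\rmtr})\}/\sigma_{\rmte}\xrightarrow{d}\mathcal N(0,1)$.

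Third, the sample variance $\widehat\sigma^2_{\rmte}$ in \eqref{eq:linear_zfd_stat} is consistent. We write
\[
\widehat\sigma^2_{\rmte}=\tfrac{N_{\rmte}}{N_{\rmte}-1}\Bigl(\tfrac1{N_{\rmte}}\textstyle\sum_i (t_i^{\rmte})^2-\widehat T_{\rmte}^2\Bigr).
\]
The weak law of large numbers applies to both $\tfrac1{N_{\rmte}}\sum_i (t_i^{\rmte})^2$ and $\widehat T_{\rmte}$, because $\E (t^{\rmte})^2<\infty$. By the continuous mapping theorem, $\widehat\sigma^2_{\rmte}\to\sigma^2_{\rmte}$ in probability. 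Since $\sigma_{\rmte}>0$, the ratio $\sigma_{\rmte}/\widehat\sigma_{\rmte}\to1$ in probability.

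Fourth, Slutsky's lemma combines the CLT with this ratio and yields the stated convergence of the studentized statistic, conditionally on the training split.

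The only delicate point is the conditioning. The convergence holds for each fixed realization of $\bu^{\rmtr}$, because the test sample is independent of the training data. It can be made unconditional by dominated convergence applied to the conditional distribution functions, which are bounded by one. This step requires that the nondegeneracy condition $0<\sigma^2_{\rmte}<\infty$ hold for almost every training realization; that is exactly how we read the hypothesis. No uniformity over witnesses is needed, because $N_{\rmte}\to\infty$ while the witness stays fixed.
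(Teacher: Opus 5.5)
Your proposal is correct and takes essentially the same route as the paper: condition on the training split so that $\bu^{\rmtr}$ is fixed, apply the classical CLT to the i.i.d.\ held-out scores, show that $\widehat\sigma^2_{\rmte}$ is consistent, and finish with Slutsky's theorem. Your extra remarks are sound refinements (Cauchy--Schwarz integrability of the mean, the explicit law-of-large-numbers decomposition of the sample variance, and the dominated-convergence passage to an unconditional statement); the paper in turn also checks that $T(P,Q;\bu^{\rmtr})=0$ under $H_0$, which is not part of the displayed statement.
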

A proof can be found in Appendix \ref{app:prf_asymp_dist}.
Consequently, for a fixed learned field \(\bu^{\rmtr}\) and sufficiently large \(N_{\rmte}\), the test threshold and asymptotic power of ZF2ST can be characterized through the normal approximation of \(\widehat Z_{\rm ZF}\).

Motivated by the signed alignment in \eqref{eq:cross_term}, we use a one-sided test. Under the alternative, the desired \(T(P,Q;\bu^{\rmtr})\) corresponds to positive alignment between the learned field $\bu^{\rmtr}$ and the true midpoint velocity $\bv$. A two-sided test would discard this directional information by allocating rejection probability to both tails, and can therefore suffer from a loss of power.
For sufficiently large $N_{\rm te}$, \Cref{thm:zf2st_asymp} gives the asymptotic right-tail $p$-value
\[
p_{\mathrm{val}} = 1-\Phi(\widehat Z_{\rm ZF}), 
\]
where \(\Phi\) is the standard normal cumulative distribution function. The level-\(\alpha\) ZF2ST rejects \(H_0\) whenever \(p_{\mathrm{val}}\leq \alpha\). 
For finite-sample calibration, our experiments instead use the paired sign-flip randomization procedure described in \Cref{app:signflip}.
This calibration exploits the antisymmetry of the held-out alignment scores under the exchange of paired testing samples and reuses the pre-computed witness scores without additional witness evaluations.

A test is said to be consistent if its power converges to one as the sample size increases. The asymptotic distribution of ZF2ST motivates the following power approximation and yields a consistency result under the alternative \(H_1: P\neq Q\), both characterized by the signal-to-noise ratio of the held-out alignment statistic.
\begin{proposition}[Power approximation and test consistency]
\label{prop:power_approx}
Let \(z_{1-\alpha}:=\Phi^{-1}(1-\alpha)\). For a fixed learned witness \(\bu^{\mathrm{tr}}\), define $\sigma^2_{\rmte}:=\operatorname{Var}(t^{\rmte})$ as the variance of the held-out score $t^{\rmte}$ conditioned on $\bu^{\rmtr}$ in \eqref{eq:linear_zfd_stat} such that $0 < \sigma^2_{\rmte} < \infty$. The population signal-to-noise ratio of the fixed learned witness \(\bu^{\mathrm{tr}}\) is
\begin{equation} \label{eq:snr}
    \mathrm{SNR}(\bu^{\mathrm{tr}})
    :=
    \frac{
        T(P,Q;\bu^{\mathrm{tr}})
    }{
        \sigma_{\rmte}
    } .
\end{equation}
For sufficiently large $N_{\rmte}$, a Gaussian approximation to the power of the one-sided level-\(\alpha\) ZF2ST satisfies
\begin{equation} \label{eq:power}
    \Pr\left( \widehat Z_{\mathrm{ZF}}>z_{1-\alpha} \right) \approx 1-\Phi\left( z_{1-\alpha} - \sqrt{N_{\mathrm{te}}} \, \mathrm{SNR}(\bu^{\mathrm{tr}}) \right). 
\end{equation}
In particular, under any fixed alternative $H_1$ for which
$T(P,Q;\bu^{\mathrm{tr}})= c'>0$, the power converges to one as $N_{\rm te}\to\infty$.
\end{proposition}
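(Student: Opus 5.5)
We argue conditionally on the training split throughout, so that \(\bu^{\rmtr}\) is a fixed element of \(L^2(P_M;\R^d)\). The held-out scores \(t_i^{\rmte}=\langle \bu^{\rmtr}(M_i^{\rmte}),D_i^{\rmte}\rangle\) are then i.i.d. with mean \(T:=T(P,Q;\bu^{\rmtr})\) and variance \(\sigma_{\rmte}^2\in(0,\infty)\). The plan has three steps: rewrite the rejection event in terms of the centered studentized statistic, apply \Cref{thm:zf2st_asymp} to get the Gaussian power formula, and then handle the drift term to obtain consistency.

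\textbf{Step 1: decomposition.} Write
\[
\widehat Z_{\mathrm{ZF}} = \frac{\sqrt{N_{\rmte}}(\widehat T_{\rmte}-T)}{\widehat\sigma_{\rmte}} + \sqrt{N_{\rmte}}\,\frac{T}{\widehat\sigma_{\rmte}} =: A_N + B_N .
\]
By \Cref{thm:zf2st_asymp}, \(A_N\xrightarrow{d}\mathcal N(0,1)\). By the weak law of large numbers applied to \(t_i^{\rmte}\) and \((t_i^{\rmte})^2\) (finite second moment), \(\widehat\sigma_{\rmte}\to\sigma_{\rmte}\) in probability. Therefore \(B_N = \sqrt{N_{\rmte}}\,\mathrm{SNR}(\bu^{\rmtr})\,(1+o_p(1))\).

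\textbf{Step 2: the approximation \eqref{eq:power}.} Replace \(\widehat\sigma_{\rmte}\) in \(B_N\) by its limit \(\sigma_{\rmte}\) and \(A_N\) by a standard normal \(Z\). This gives
\[
\Pr(\widehat Z_{\mathrm{ZF}}>z_{1-\alpha}) \approx \Pr\bigl(Z>z_{1-\alpha}-\sqrt{N_{\rmte}}\,\mathrm{SNR}(\bu^{\rmtr})\bigr) = 1-\Phi\bigl(z_{1-\alpha}-\sqrt{N_{\rmte}}\,\mathrm{SNR}(\bu^{\rmtr})\bigr).
\]
This is a heuristic first-order approximation, as the statement's ``\(\approx\)'' indicates. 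To justify it rigorously, one would note that for a local sequence with \(\sqrt{N_{\rmte}}\,T\) bounded the error is \(o(1)\) by Slutsky's lemma and Pólya's theorem, since the limit \(\Phi\) is continuous and convergence of distribution functions is therefore uniform.

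\textbf{Step 3: consistency.} Let \(T=c'>0\). Since \(\widehat\sigma_{\rmte}\to\sigma_{\rmte}<\infty\) in probability, \(B_N\to\infty\) in probability. Since \(A_N=O_p(1)\), it follows that \(\Pr(A_N+B_N>z_{1-\alpha})\to1\).

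A cleaner route avoids the studentization: by the weak law of large numbers, \(\widehat T_{\rmte}\to c'\) and \(\widehat\sigma_{\rmte}\to\sigma_{\rmte}\) in probability, so \(\widehat Z_{\mathrm{ZF}}/\sqrt{N_{\rmte}}\to c'/\sigma_{\rmte}>0\) in probability, and hence \(\widehat Z_{\mathrm{ZF}}\) exceeds any fixed threshold with probability tending to one. The right-hand side of \eqref{eq:power} also tends to \(1\), consistent with this.

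The main obstacle is Step 2. Under a fixed alternative the approximation is not a uniform limit statement: both sides tend to \(1\), so their agreement carries no information there. I would therefore state it as the Gaussian approximation implied by the CLT, valid uniformly in the threshold for the centered part, and not claim an error rate. Everything is conditional on the training split, so the unconditional statement follows by taking the expectation over \(\bu^{\rmtr}\) and applying dominated convergence, since probabilities are bounded by \(1\).
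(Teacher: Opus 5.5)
Your proposal is correct and follows the paper's proof: the same split of $\widehat Z_{\mathrm{ZF}}$ into a centered studentized term (handled by \Cref{thm:zf2st_asymp}) plus the drift $\sqrt{N_{\rmte}}\,T/\widehat\sigma_{\rmte}$, the same heuristic plug-in $\widehat\sigma_{\rmte}\approx\sigma_{\rmte}$ for the Gaussian power formula, and the same $O_p(1)$-plus-divergent-drift argument for consistency. Your WLLN-only alternative ($\widehat Z_{\mathrm{ZF}}/\sqrt{N_{\rmte}}\to c'/\sigma_{\rmte}$ in probability) is a slightly more elementary variant that avoids the CLT. One aside is loose: the local-alternative remark would need a triangular-array (Lindeberg--Feller) CLT, because \Cref{thm:zf2st_asymp} is stated for fixed $P,Q$.
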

A proof can be found in Appendix~\ref{app:prf_power}. 
The result implies that ZF2ST is consistent whenever the learned field $\bu^{\rm tr}$ has positive average alignment with the true midpoint velocity $\bv$, and the held-out alignment scores $t_i^{\rm te}$ have finite, nonzero variance $\sigma_{\rm te}^2$.
Moreover, the approximated rejection probability is monotone increasing in the effective signal strength $\sqrt{N_{\mathrm{te}}}\, \mathrm{SNR}(\bu^{\mathrm{tr}})$.

\subsection{Approaches to learning the witness} \label{subsec:learning_u}
The power analysis shows that ZF2ST benefits from a learned field \(\bu^{\rmtr}\) that has positive alignment with the true midpoint velocity \(\bv\).
Since sample splitting allows flexible witness learning without compromising held-out calibration \citep{kubler2022witness}, we consider two practical ways to learn a direction in a prescribed function class \(\bu \in \mathcal{F}\).
Both objectives can be combined with the dynamic pairing strategy introduced in Appendix~\ref{app:dyn_pair}.

\textbf{Least-square regression.} A natural choice for learning witness is to directly estimate the population midpoint velocity $\bv(\bm)=\E[D\mid M=\bm]$. Given training split $S_X^{\rmtr}, S_Y^{\rmtr}$, we solve a regression
\begin{equation} \label{eq:u_reg}
    \widehat{\bu}_{\rm reg}^{\rmtr}
    :=
\underset{\bu\in\mathcal F}{\arg\min} \,
\widehat\E_{\rmtr}
\left[
\normreg{\bu(M)-D}^2
\right].
\end{equation}
This regression witness targets the optimal midpoint velocity itself, and is therefore the most direct implementation of the zero-flow signal.

\textbf{Maximize the SNR.}
Since the test only requires a direction that has large positive alignment with the zero-flow signal, we may instead learn a witness by maximizing a regularized empirical signal-to-noise ratio. This follows the same principle as power-optimized kernel two-sample tests \citep{kubler2022witness,liu2020learning,sutherland2016generative}, where the witness or kernel is trained to improve testing power while the final calibration is performed on held-out data. Motivated by the scale-invariant dual form in \eqref{eq:cs_type} and the power approximation in \Cref{prop:power_approx}, we consider
\begin{equation}
\label{eq:power_oriented_witness}
\widehat{\bu}^{\rm tr}_{\lambda}
:= 
\underset{\bu\in\mathcal{F};  \widehat{\E}_{\rm tr}\|\bu(M)\|^2>0}{\arg\max}
\frac{
    \widehat{\mathbb E}_{\rm tr}
    \langle \bu(M),D\rangle
}{
    \left[
        \widehat{\operatorname{Var}}_{\rm tr}
        \bigl(\langle \bu(M),D\rangle\bigr)
        +
        \lambda
        \widehat{\mathbb E}_{\rm tr}
        \|\bu(M)\|^2
    \right]^{1/2}
}.
\end{equation}
where \(\lambda>0\) controls the magnitude of regularization.
Unlike the regression objective, this criterion directly searches for a direction with large testing signal relative to its variability.

\begin{algorithm}[t]
\caption{ZF2ST}
\label{alg:zf2st}
\small
\begin{algorithmic}[1]
\Require Samples \(S_X,S_Y\), training epochs $E$, number of randomizations \(B\), level \(\alpha\)

\State \(S_X=S_X^{\rm tr}\cup S_X^{\rm te}, \qquad S_Y=S_Y^{\rm tr}\cup S_Y^{\rm te}\)

\State 
\(
    \widehat{\bu}^{\rm tr}
    \gets
    \Call{TrainingWithDynamicPairing}
    {S_X^{\rm tr},S_Y^{\rm tr}, E}
\)
\Comment{\Cref{alg:dynamic_pairing}}


\State 
\(
    p_{\rm val}
    \gets
    \Call{SignFlipCalibration}
    {\widehat{\bu}^{\rm tr},S_X^{\rm te},S_Y^{\rm te}, B}
\)
\Comment{\Cref{alg:sf_perm}}
\State \Return reject \(H_0\) if \(p_{\rm val} \leq \alpha\)
\end{algorithmic}
\end{algorithm}

\textbf{Training complexity.}
Treating the data dimension and network architecture as fixed, let \(C_{\bu}\) denote the cost of evaluating and backpropagating through the witness network for one midpoint.
Each ZF2ST witness training update on a mini-batch of size \(b\) costs $\mathcal O(bC_{\bu})$ since pairing construction and loss computation involve only linear-time elementwise operations. 
Dynamic pairing additionally requires a linear-time permutation once per epoch. Importantly, ZF2ST does not require constructing a pairwise Gram matrix.
However, this lower per-update complexity does not necessarily
imply a uniformly lower total training cost: learning a complex
vector-valued witness may require more optimization updates.
A detailed complexity comparison is provided in \Cref{app:complexity}.

\section{Experiments}
\label{sec:experiments}

We instantiate the witness class $\mathcal F$ using deep neural networks and evaluate two variants of ZF2ST.
ZF-Reg, defined in
\eqref{eq:u_reg}, learns a vector-valued witness by regressing the
midpoint displacement, whereas ZF-SNR, defined in
\eqref{eq:power_oriented_witness}, directly maximizes a regularized
empirical signal-to-noise ratio on the training split.

As baselines, we include MMD-D \citep{liu2020learning}, a deep-kernel MMD test with a learned feature map, and C2ST, a classifier two-sample test. For C2ST, we report both the standard label-based statistic, denoted C2ST-S \citep{lopez2016revisiting}, and the logits-based statistic, denoted C2ST-L \citep{cheng2022classification}. We also include MMD-O \citep{sutherland2016generative}, which optimizes the bandwidth of a standard kernel using an empirical power criterion and serves as a strong non-neural baseline.
All learned components are trained only on the training samples and then frozen before evaluation on held-out samples. 
Configuration details, benchmark details and evaluation protocols are provided in Appendix~\ref{app:exp_details}.

\begin{figure}
    \centering
    \includegraphics[width=\linewidth]{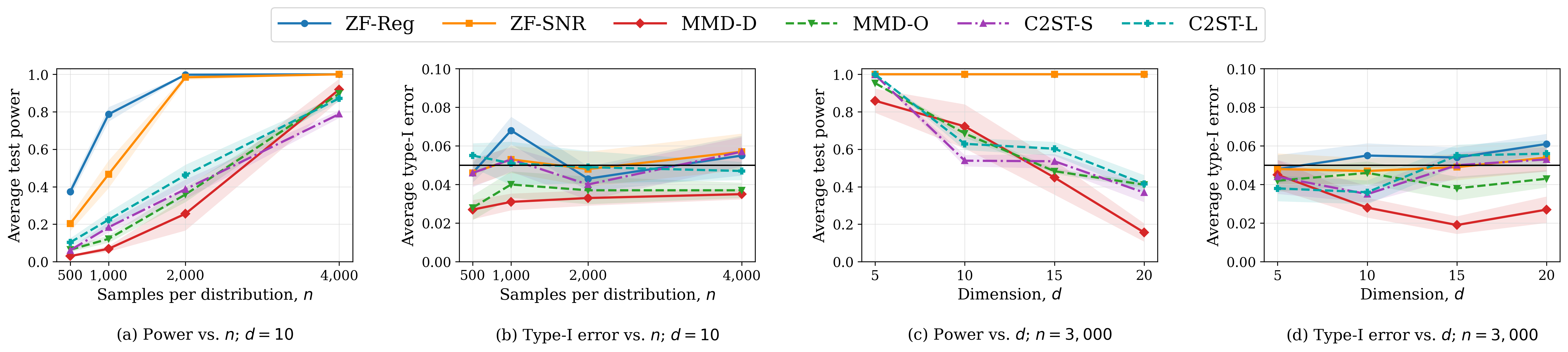}
    \caption{
    Results on the HDGM benchmark for \(\alpha=0.05\) (black line).
    Left: average test power (a) and type-I error (b) as the number of
    samples per distribution \(n\) increases, with \(d=10\).
    Right: average test power (c) and type-I error (d) as the dimension
    \(d\) increases, with \(n=3000\).
    Shaded regions show one standard error across independent training trials.
    }
    \label{fig:hdgm_comparison}
\end{figure}

\textbf{High-Dimensional Gaussian Mixtures (HDGM, \citealp{liu2020learning}).}
We consider bimodal Gaussian mixtures of increasing dimension. We vary either
the sample size \(n\) with \(d=10\), or the dimension \(d\) with
\(n=3000\).
As shown in \Cref{fig:hdgm_comparison}, both ZF2ST variants gain power
rapidly as the sample size increases and reach nearly full power by
$n=2000$. ZF-Reg is particularly effective in the smaller-sample
regime. When the dimension increases, ZF-Reg and ZF-SNR retain power
close to one, whereas the power of the kernel and classifier baselines
decreases substantially. Across both experiments, the empirical type-I
errors remain broadly close to the nominal level. These results suggest
that the vector-valued midpoint witness is well suited to this
alternative, where the distributional difference has a coherent
directional structure across the mixture components.

\begin{figure}[t]
    \centering

    \begin{subfigure}[b]{0.49\linewidth}
        \centering
        \includegraphics[width=\linewidth]{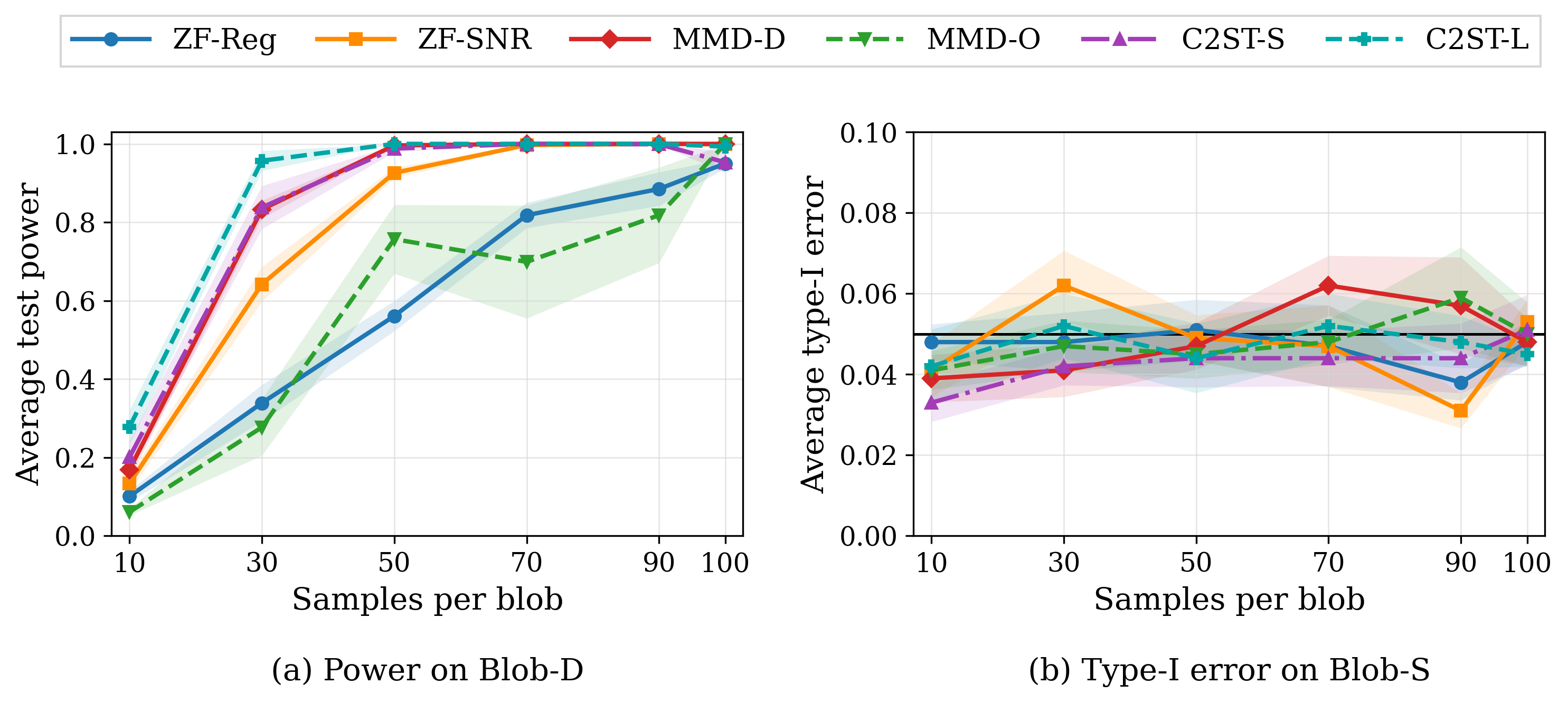}
        \caption{Blob benchmarks.}
        \label{fig:blob_comparison}
    \end{subfigure}
    \hfill
    \begin{subfigure}[b]{0.49\linewidth}
        \centering
        \includegraphics[width=\linewidth]{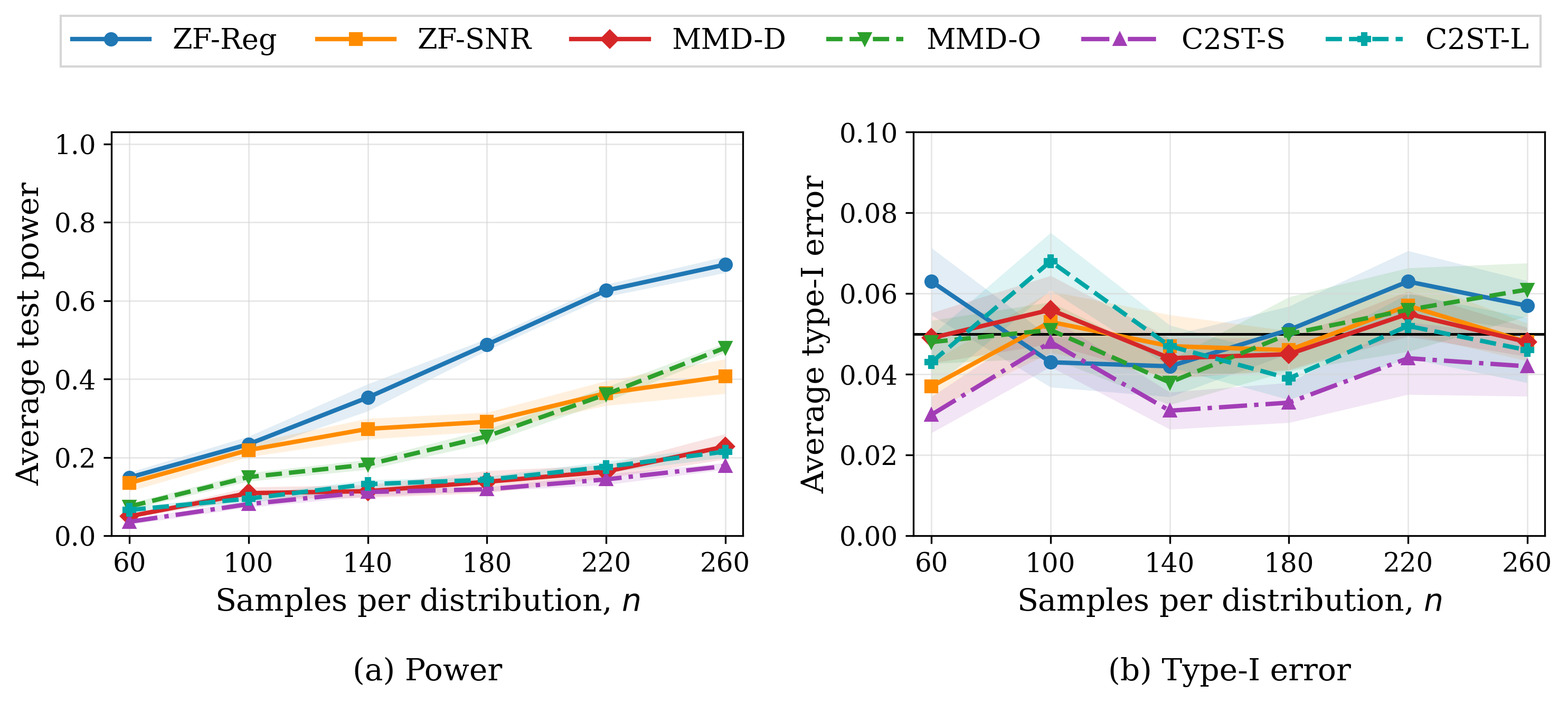}
        \caption{MNIST rare-contamination benchmark.}
        \label{fig:mnist_69_comparison}
    \end{subfigure}

    \caption{
    Results on the Blob and MNIST benchmarks for \(\alpha=0.05\) (black line).
    Left: average test power on Blob-D and type-I error on Blob-S as the number
    of samples per blob varies.
    Right: average test power under MNIST \(6\)-to-\(9\) contamination and
    type-I error under \(P=Q=\mu_6\) as the number of samples per distribution
    varies. Shaded regions denote one standard error across independent
    training trials.
    }
    \label{fig:blob_mnist_comparison}
\end{figure}

\textbf{Blob datasets \citep{liu2020learning}.}
We consider the \(3\times3\) Gaussian-mixture benchmarks. Blob-D preserves the mode locations and mixture
weights but introduces mode-dependent covariance changes, resulting in a
highly local and multimodal alternative.
As shown in \Cref{fig:blob_comparison}, all methods exhibit type-I error
close to the nominal level on Blob-S. On Blob-D, the kernel and
classifier baselines are more powerful in the small-sample regime.
ZF-SNR improves rapidly with sample size and reaches near-full power
from approximately $70$ samples per blob, whereas ZF-Reg requires more
observations to attain comparable performance. This benchmark therefore
highlights a limitation of ZF2ST: when the discrepancy is confined to
subtle within-mode covariance changes, random cross-pairing can dilute
the relevant local structure before it is represented by the midpoint
witness.

\textbf{MNIST mixture.}
We next consider the MNIST dataset \citep{lecun1998gradient}. Let \(\mu_6\)
and \(\mu_9\) denote the empirical MNIST distributions of digits \(6\) and
\(9\), respectively. We test $P=\mu_6$ and $Q=(1-\pi)\mu_6+\pi\mu_9$ 
with contamination level \(\pi=0.1\). Thus, \(Q\) contains a small fraction
of digit \(9\) images, whereas \(P\) contains only digit \(6\) images.
As shown in \Cref{fig:mnist_69_comparison}, ZF-Reg achieves the highest
power throughout the considered sample-size range, with the gap
widening as $n$ increases. ZF-SNR also outperforms most learned
baselines, although MMD-O becomes comparable at the largest sample
sizes. The empirical type-I errors of all methods remain broadly close
to the nominal level. One possible explanation is that the
digit-$6$-to-digit-$9$ pairs induce a coordinated high-dimensional
displacement pattern that can be retained by the vector-valued witness
before aggregation into the final scalar alignment statistic.

\section{Related Work}
\label{sec:related}

\textbf{Kernel and Classifier-based two-sample tests.}
Kernel two-sample tests compare distributions through expectations over
a function class. A central example is the maximum mean discrepancy
(MMD), which searches for the largest scalar mean contrast over the
unit ball of an RKHS \citep{gretton2012kernel}. A substantial line of
work improves finite-sample power by adapting the statistic to the
observed alternative, including optimized test locations and
frequencies \citep{jitkrittum2016interpretable}, power-optimized kernels
\citep{sutherland2016generative}, and learned deep kernels
\citep{liu2020learning}. Classifier two-sample tests instead train a
discriminator and use classification accuracy, logits, or related
scores as evidence of distributional difference
\citep{lopez2016revisiting,cheng2022classification}. These methods
primarily extract scalar discriminative evidence from individual
endpoint observations.

\textbf{Learned-witness tests.}
ZF2ST is closely related to the witness two-sample test (WiTS)
\citep{kubler2022witness}. WiTS learns a scalar witness on a training
split and evaluates its empirical mean difference on an independent
test split. Conditional on the training data, the witness is fixed, so
its estimation affects testing power without invalidating the held-out
null calibration.

ZF2ST adopts the same sample-splitting principle but replaces the
single-observation scalar witness with a pairwise vector-valued field.
Each held-out pair contributes a scalar alignment between the learned
midpoint field and the corresponding displacement. Under the null, the
zero-flow criterion centers this alignment for any fixed witness.
ZF2ST may therefore be viewed as a midpoint--displacement analogue of
WiTS, replacing an endpoint contrast with directional alignment.

\textbf{Zero-flow and midpoint generative models.}
\citet{wang2026zero} introduced the zero-flow criterion, showing
that under independent coupling the midpoint flow-matching velocity
vanishes everywhere if and only if the two endpoint distributions
coincide. That work used the criterion to characterize conditional
independence and construct representation-learning objectives. 
A subsequent related development, Midpoint Generative Models
\citep{shlenskii2026midpoint}, uses the same midpoint symmetry to define
a closely related midpoint-field discrepancy and derive objectives for
one-step generative modeling. ZF2ST instead focuses on statistical
inference: it formulates the midpoint velocity as a vector-valued
two-sample witness, derives a scale-invariant dual representation, and
evaluates the learned witness through a studentized held-out alignment
statistic. It further provides a signal-to-noise-ratio characterization
of testing power and a finite-sample paired sign-flip calibration.


\section{Conclusion}

Building on the midpoint zero-flow characterization, we introduced
ZF2ST, a two-sample test based on a learned vector-valued midpoint
witness. By learning the witness on a training split and evaluating its
studentized linear alignment with displacements on an independent
evaluation split, ZF2ST avoids the positive null bias of a direct
plug-in estimator of ZFD while retaining valid statistical calibration.
We developed both regression-based and power-oriented max-SNR
objectives for witness learning, and characterized the asymptotic null
distribution and testing power through the signal-to-noise ratio of the
held-out alignment scores.

Our experiments indicate that zero-flow witnesses can be effective when
distributional differences induce structured and predictable
midpoint--displacement patterns, including high-dimensional mixture
alternatives and rare semantic contamination. They also reveal that
highly localized multimodal changes can be more challenging, as random
cross-pairing may obscure the relevant local structure. Promising
directions for future work include studying more regular witness
classes, such as reproducing kernel Hilbert spaces, establishing
convergence guarantees for witness learning, and extending ZF2ST to
learned feature spaces.

\section*{Acknowledgements}
TS was partially supported by JSPS KAKENHI (24K02905) and JST CREST (PMJCR2015). This research is supported by the National Research Foundation, Singapore and the Ministry of Digital Development and Information under the AI Visiting Professorship Programme (award number AIVP-2024-004). Any opinions, findings and conclusions or recommendations expressed in this material are those of the author(s) and do not reflect the views of National Research Foundation, Singapore and the Ministry of Digital Development and Information. 

\bibliographystyle{apalike}
\bibliography{refs}

@inproceedings{liu2020learning,
  title={Learning deep kernels for non-parametric two-sample tests},
  author={Liu, Feng and Xu, Wenkai and Lu, Jie and Zhang, Guangquan and Gretton, Arthur and Sutherland, Danica J},
  booktitle={International conference on machine learning},
  pages={6316--6326},
  year={2020},
  organization={PMLR}
}

@article{lopez2016revisiting,
  title={Revisiting classifier two-sample tests},
  author={Lopez-Paz, David and Oquab, Maxime},
  journal={arXiv preprint arXiv:1610.06545},
  year={2016}
}

@article{gretton2012kernel,
  title={A kernel two-sample test},
  author={Gretton, Arthur and Borgwardt, Karsten M and Rasch, Malte J and Sch{\"o}lkopf, Bernhard and Smola, Alexander},
  journal={The journal of machine learning research},
  volume={13},
  number={1},
  pages={723--773},
  year={2012},
  publisher={JMLR. org}
}

@article{wang2026zero,
  title={Zero-Flow Encoders},
  author={Wang, Yakun and Wang, Leyang and Liu, Song and Suzuki, Taiji},
  journal={arXiv preprint arXiv:2602.00797},
  year={2026}
}

@inproceedings{kubler2022witness,
  title={A witness two-sample test},
  author={K{\"u}bler, Jonas M and Jitkrittum, Wittawat and Sch{\"o}lkopf, Bernhard and Muandet, Krikamol},
  booktitle={International Conference on Artificial Intelligence and Statistics},
  pages={1403--1419},
  year={2022},
  organization={PMLR}
}

@article{sutherland2016generative,
  title={Generative models and model criticism via optimized maximum mean discrepancy},
  author={Sutherland, Danica J and Tung, Hsiao-Yu and Strathmann, Heiko and De, Soumyajit and Ramdas, Aaditya and Smola, Alex and Gretton, Arthur},
  journal={arXiv preprint arXiv:1611.04488},
  year={2016}
}

@article{jitkrittum2016interpretable,
  title={Interpretable distribution features with maximum testing power},
  author={Jitkrittum, Wittawat and Szab{\'o}, Zolt{\'a}n and Chwialkowski, Kacper P and Gretton, Arthur},
  journal={Advances in Neural Information Processing Systems},
  volume={29},
  year={2016}
}

@article{cheng2022classification,
  title={Classification logit two-sample testing by neural networks for differentiating near manifold densities},
  author={Cheng, Xiuyuan and Cloninger, Alexander},
  journal={IEEE Transactions on Information Theory},
  volume={68},
  number={10},
  pages={6631--6662},
  year={2022},
  publisher={IEEE}
}

@article{liu2022flow,
  title={Flow straight and fast: Learning to generate and transfer data with rectified flow},
  author={Liu, Xingchao and Gong, Chengyue and Liu, Qiang},
  journal={arXiv preprint arXiv:2209.03003},
  year={2022}
}

@book{good2013permutation,
  title={Permutation tests: a practical guide to resampling methods for testing hypotheses},
  author={Good, Phillip},
  year={2013},
  publisher={Springer Science \& Business Media}
}

@article{efron1985bootstrap,
  title={The bootstrap method for assessing statistical accuracy},
  author={Efron, Bradley and Tibshirani, Robert},
  journal={Behaviormetrika},
  volume={12},
  number={17},
  pages={1--35},
  year={1985},
  publisher={Springer}
}

@inproceedings{lipmanflow,
  title={Flow Matching for Generative Modeling},
  author={Lipman, Yaron and Chen, Ricky TQ and Ben-Hamu, Heli and Nickel, Maximilian and Le, Matthew},
  booktitle={International Conference on Learning Representations},
  year={2023}
}

@book{serfling2009approximation,
  title={Approximation theorems of mathematical statistics},
  author={Serfling, Robert J},
  year={2009},
  publisher={John Wiley \& Sons}
}

@article{albergo2025stochastic,
  title={Stochastic interpolants: A unifying framework for flows and diffusions},
  author={Albergo, Michael and Boffi, Nicholas M and Vanden-Eijnden, Eric},
  journal={Journal of Machine Learning Research},
  volume={26},
  number={209},
  pages={1--80},
  year={2025}
}

@article{hemerik2018exact,
  title={Exact testing with random permutations},
  author={Hemerik, Jesse and Goeman, Jelle},
  journal={Test},
  volume={27},
  number={4},
  pages={811--825},
  year={2018},
  publisher={Springer}
}

@article{phipson2016permutation,
  title={Permutation P-values should never be zero: calculating exact P-values when permutations are randomly drawn},
  author={Phipson, Belinda and Smyth, Gordon K},
  journal={arXiv preprint arXiv:1603.05766},
  year={2016}
}

@inproceedings{ramdas2015decreasing,
  title={On the decreasing power of kernel and distance based nonparametric hypothesis tests in high dimensions},
  author={Ramdas, Aaditya and Reddi, Sashank Jakkam and P{\'o}czos, Barnab{\'a}s and Singh, Aarti and Wasserman, Larry},
  booktitle={Proceedings of the AAAI Conference on Artificial Intelligence},
  volume={29},
  number={1},
  year={2015}
}

@article{lecun1998gradient,
  title={Gradient-based learning applied to document recognition},
  author={LeCun, Yann and Bottou, L{\'e}on and Bengio, Yoshua and Haffner, Patrick},
  journal={Proceedings of the IEEE},
  volume={86},
  number={11},
  pages={2278--2324},
  year={1998},
  publisher={Ieee}
}

@article{rabanser2019failing,
  title={Failing loudly: An empirical study of methods for detecting dataset shift},
  author={Rabanser, Stephan and G{\"u}nnemann, Stephan and Lipton, Zachary},
  journal={Advances in Neural Information Processing Systems},
  volume={32},
  year={2019}
}

@article{shlenskii2026midpoint,
  title={Midpoint Generative Models},
  author={Shlenskii, Daniil and Gushchin, Nikita and Novitskiy, Lev and Dylov, Dmitry V and Korotin, Alexander},
  journal={arXiv preprint arXiv:2605.29920},
  year={2026}
}

@article{butakov2025fmmi,
  title={Fmmi: Flow matching mutual information estimation},
  author={Butakov, Ivan and Semenenko, Alexander and Kirova, Valeriya and Frolov, Alexey and Oseledets, Ivan},
  journal={arXiv preprint arXiv:2511.08552},
  year={2025}
}

@article{yu2025missing,
  title={Missing data imputation by reducing mutual information with rectified flows},
  author={Yu, Jiahao and Ying, Qizhen and Wang, Leyang and Jiang, Ziyue and Liu, Song},
  journal={Advances in Neural Information Processing Systems},
  volume={38},
  pages={80324--80352},
  year={2025}
}

\appendix

\section{Proofs}
\subsection{Proof of \Cref{prop:zfd-properties}} \label{app:prf_zfdproperties}
\begin{proof}
Let \(X\sim P\) and \(Y\sim Q\) be independent, and define
\(M=(X+Y)/2\) and \(D=Y-X\). By \Cref{def:zfd},
\[
    \mathrm{ZFD}(P,Q)
    =
    \E\normreg{
        \E[D\mid M]
    }^2 .
\]
Therefore \(\mathrm{ZFD}(P,Q)\ge 0\). Moreover,
\(\mathrm{ZFD}(P,Q)=0\) if and only if
\[
    \E[D\mid M]=\bzero
    \qquad \text{a.e.}
\]
By \Cref{prop:zf_criterion}, this is equivalent to \(P=Q\). This proves
non-negativity and identification.

It remains to prove symmetry. For the reversed pair, let \(Y\sim Q\) and
\(X\sim P\) be independent and define
\[
    M'=\frac{Y+X}{2}=M,
    \qquad
    D'=X-Y=-D .
\]
The corresponding midpoint velocity is
\[
    \bv_{Q,P}(\bm)
    =
    \E[D'\mid M'=\bm]
    =
    \E[-D\mid M=\bm]
    =
    -\bv_{P,Q}(\bm).
\]
Since the midpoint distribution is unchanged under swapping \(X\) and \(Y\),
we obtain
\[
    \mathrm{ZFD}(Q,P)
    =
    \E\normreg{\bv_{Q,P}(M')}^2
    =
    \E\normreg{-\bv_{P,Q}(M)}^2
    =
    \E\normreg{\bv_{P,Q}(M)}^2
    =
    \mathrm{ZFD}(P,Q).
\]
This completes the proof.
\end{proof}

\subsection{Proof of \Cref{prop:dual_form}}
\label{app:pf_dual}

\begin{lemma}[Tower identity]
\label{lem:tower}
For any measurable $\bu:\R^d\to\R^d$ with finite moments,
\begin{equation}
\label{eq:tower}
\E\bracket{\langle\bu(M),D\rangle}= \E\bracket{\langle\bu(M),\bv(M)\rangle}.
\end{equation}
\end{lemma}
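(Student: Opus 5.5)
The plan is the standard tower-property argument for conditional expectations, with the one subtlety being integrability. The phrase ``with finite moments'' is read as \(\bu\in L^2(P_M;\R^d)\), that is, \(\E\normreg{\bu(M)}^2<\infty\). This is the setting in which the identity is used in \eqref{eq:cross_term} and \Cref{prop:dual_form}.

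\textbf{Step 1: integrability.} Both expectations in \eqref{eq:tower} must be well defined. Since \(X\) and \(Y\) have finite second moments, \(\E\normreg{D}^2\le 2\E\normreg{X}^2+2\E\normreg{Y}^2<\infty\). By Cauchy--Schwarz,
\[
\E\bigl|\langle\bu(M),D\rangle\bigr|\le \bigl(\E\normreg{\bu(M)}^2\bigr)^{1/2}\bigl(\E\normreg{D}^2\bigr)^{1/2}<\infty .
\]
By conditional Jensen, \(\E\normreg{\bv(M)}^2\le \E\normreg{D}^2<\infty\). Hence \(\langle\bu(M),\bv(M)\rangle\) is integrable by the same Cauchy--Schwarz bound. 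This is also the step that justifies the later manipulations, so it is where I expect the only real care to be needed.

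\textbf{Step 2: tower property.} Apply the law of total expectation conditional on \(M\):
\[
\E\langle\bu(M),D\rangle=\E\bigl[\E[\langle\bu(M),D\rangle\mid M]\bigr].
\]
Because \(\bu(M)\) is \(\sigma(M)\)-measurable, the ``pull-out'' property gives
\[
\E[\langle\bu(M),D\rangle\mid M]=\langle\bu(M),\E[D\mid M]\rangle=\langle\bu(M),\bv(M)\rangle \quad\text{a.s.}
\]
This holds coordinatewise: each term \(u_k(M)D_k\) is integrable with \(u_k(M)\) measurable with respect to \(M\), and the conditional expectation is linear. Taking expectations then yields \eqref{eq:tower}.

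\textbf{Alternative route.} As a cross-check, the identity can also be derived from the \(L^2\) projection characterization: \(D-\bv(M)\) is orthogonal in \(L^2\) to every square-integrable \(\sigma(M)\)-measurable vector. Applying this with \(\bu(M)\) gives \(\E\langle\bu(M),D-\bv(M)\rangle=0\), which is the same identity.
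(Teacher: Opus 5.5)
Your proposal is correct and follows the same argument as the paper: condition on $M$, pull out the $\sigma(M)$-measurable factor $\bu(M)$, and identify $\E[D\mid M]$ with $\bv(M)$. Your Step 1 integrability check, via Cauchy--Schwarz and conditional Jensen under the reading $\bu\in L^2(P_M;\R^d)$, is something the paper leaves implicit; it is exactly what justifies the pull-out step.
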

\begin{proof}
By the tower property,
\[
\E[\langle\bu(M),D\rangle]
=\E\,\E[\langle\bu(M),D\rangle\mid M]
=\E\langle\bu(M),\E[D\mid M]\rangle
=\E\langle\bu(M),\bv(M)\rangle.
\]
\end{proof}

\begin{proof}[Proof of \Cref{prop:dual_form}]
By \Cref{lem:tower}, for any $\bu\in L^2(P_M;\R^d)$,
\[
    \E\langle \bu(M),D\rangle
    =
    \E\langle \bu(M),\bv(M)\rangle .
\]
By the Cauchy-Schwarz inequality in \(L^2(P_M;\R^d)\),
\[
    \left(\E\langle \bu(M),\bv(M)\rangle\right)^2
    \le
    \E\normreg{\bu(M)}^2
    \E\normreg{\bv(M)}^2 .
\]
Hence, together with \Cref{lem:tower}, for every admissible $\bu$,
\[
    \frac{
    \left(\E\langle \bu(M),D\rangle\right)^2
    }{
    \E\normreg{\bu(M)}^2
    }
    \le
    \E\normreg{\bv(M)}^2
    =
    \ZFD(P,Q).
\]
If $\ZFD(P,Q)>0$, equality is attained by taking $\bu=c\bv$ with $c \neq 0$.
If \(\ZFD(P,Q)=0\), then \(\bv(M)=0\) a.e. By
\Cref{lem:tower}, the numerator $\E\langle \bu(M),\bv(M)\rangle\ \equiv 0$ for every admissible \(\bu\). Therefore the quotient is identically zero,
and the supremum is also zero. Thus,
\[
    \ZFD(P,Q)
    =
    \sup_{{\bu\in L^2(P_M;\R^d); 
    \E\normreg{\bu(M)}^2>0}}
    \frac{
    \left(\E\langle \bu(M),D\rangle\right)^2
    }{
    \E\normreg{\bu(M)}^2
    } .
\]
\end{proof}

\subsection{Proof of \Cref{thm:zf2st_asymp}}
\label{app:prf_asymp_dist}

\begin{proof}
Conditional on the training fold $(S_X^{\rmtr},S_Y^{\rmtr})$, the learned witness field \(\bu^{\rm tr}\) is fixed and independent of the evaluation
folds. Therefore, the held-out alignment scores
\[
    t_i^{\rm te}
    =
    \left\langle
        \bu^{\rm tr}(M_i^{\rm te}),
        D_i^{\rm te}
    \right\rangle,
    \qquad i=1,\ldots,N_{\rm te},
\]
are independent and identically distributed scalar random variables. Their mean is
\[
    \mathbb E\!\left[t_i^{\rm te}\right]
    =
    \mathbb E\!\left[
        \left\langle
            \bu^{\rm tr}(M^{\rm te}),
            D^{\rm te}
        \right\rangle
    \right]
    =
    T(P,Q;\bu^{\rm tr}),
\]
where the last equality uses the fact that \(\bu^{\rm tr}\) is fixed in terms of testing fold. Their variance is
\[
    \sigma_{\rm te}^2
    =
    \operatorname{Var}
    \left(
        t^{\rm te}
    \right)
    =
    \operatorname{Var}
    \left(
        \left\langle
            \bu^{\rm tr}(M^{\rm te}),
            D^{\rm te}
        \right\rangle
    \right),
\]
which is assumed to satisfy \(0<\sigma_{\rm te}^2<\infty\).

By the classical central limit theorem \citep{serfling2009approximation} applied conditionally on the training folds,
\[
    \frac{
        \sqrt{N_{\rm te}}
        \left\{
            \widehat T_{\rm te}
            -
            T(P,Q;\bu^{\rm tr})
        \right\}
    }{
        \sigma_{\rm te}
    }
    \xrightarrow{d}
    \mathcal N(0,1),
\]
where
\[
    \widehat T_{\rm te}
    =
    \frac1{N_{\rm te}}
    \sum_{i=1}^{N_{\rm te}} t_i^{\rm te}.
\]
Moreover, since \(t_i^{\rm te}\) have finite second moment, the empirical
variance
\[
    \widehat\sigma_{\rm te}^2
    =
    \frac{1}{N_{\rm te}-1}
    \sum_{i=1}^{N_{\rm te}}
    \left(
        t_i^{\rm te}
        -
        \widehat T_{\rm te}
    \right)^2
\]
is consistent:
\[
    \widehat\sigma_{\rm te}^2
    \xrightarrow{p}
    \sigma_{\rm te}^2 .
\]
Since \(\sigma_{\rm te}^2>0\), we also have
\[
    \frac{\widehat\sigma_{\rm te}}{\sigma_{\rm te}}
    \xrightarrow{p}
    1 .
\]
Slutsky's theorem then gives
\[
    \frac{
        \sqrt{N_{\rm te}}
        \left\{
            \widehat T_{\rm te}
            -
            T(P,Q;\bu^{\rm tr})
        \right\}
    }{
        \widehat\sigma_{\rm te}
    }
    =
    \frac{
        \sqrt{N_{\rm te}}
        \left\{
            \widehat T_{\rm te}
            -
            T(P,Q;\bu^{\rm tr})
        \right\}
    }{
        \sigma_{\rm te}
    }
    \cdot
    \frac{\sigma_{\rm te}}{\widehat\sigma_{\rm te}}
    \xrightarrow{d}
    \mathcal N(0,1).
\]
This proves the asserted studentized asymptotic distribution.

It remains to identify the centering under the null. If \(H_0:P=Q\), \Cref{prop:zf_criterion} implies
\[
    \bv(\bm)
    =
    \mathbb E[D\mid M=\bm]
    \equiv \bzero .
\]
Hence, for any fixed square-integrable witness field \(\bu\), by \Cref{lem:tower}
\[
    T(P,Q;\bu)
    =
    \mathbb E\langle \bu(M),D\rangle
    =
    \mathbb E
    \left[
        \left\langle
            \bu(M),
            \bv(M)
        \right\rangle
    \right]
    =
    0 .
\]
Applying this identity to the fixed learned witness
\(\bu=\bu^{\rm tr}\), we obtain
\[
    T(P,Q;\bu^{\rm tr})=0
\]
under \(H_0\). Therefore,
\[
    \widehat Z_{\rm ZF}
    =
    \frac{
        \sqrt{N_{\rm te}}\widehat T_{\rm te}
    }{
        \widehat\sigma_{\rm te}
    }
    \xrightarrow{d}
    \mathcal N(0,1)
\]
under the null. This completes the proof.
\end{proof}

\subsection{Proof of \Cref{prop:power_approx}}
\label{app:prf_power}

\begin{proof}
Condition on the training split throughout, so that the learned witness
$\bu^{\rm tr}$ is fixed. For brevity, write
\[
    N:=N_{\rm te},
    \qquad
    T:=T(P,Q;\bu^{\rm tr}),
    \qquad
    \sigma:=\sigma_{\rm te}.
\]
The ZF2ST statistic admits the decomposition
\[
\begin{aligned}
    \widehat Z_{\rm ZF}
    &=
    \frac{\sqrt N\,\widehat T_{\rm te}}
         {\widehat\sigma_{\rm te}}\\
    &=
    \frac{
        \sqrt N
        \left(
            \widehat T_{\rm te}-T
        \right)
    }{
        \widehat\sigma_{\rm te}
    }
    +
    \sqrt N\,
    \frac{T}{\widehat\sigma_{\rm te}}.
\end{aligned}
\]
By \Cref{thm:zf2st_asymp},
\begin{equation} \label{eq:centered_stats}
    \frac{
        \sqrt N
        \left(
            \widehat T_{\rm te}-T
        \right)
    }{
        \widehat\sigma_{\rm te}
    }
    \overset{d}{\longrightarrow}
    \mathcal N(0,1),
\end{equation}
and
\(
    \widehat\sigma_{\rm te}
    \overset{p}{\longrightarrow}
    \sigma.
\)

For sufficiently large $N$, combining \eqref{eq:centered_stats} with the plug-in replacement
$\widehat\sigma_{\rm te}\approx\sigma$ in the signal term gives
\[
    \widehat Z_{\rm ZF}
    \approx
    Z+
    \sqrt N\,\frac{T}{\sigma}
    =
    Z+
    \sqrt N\,
    \operatorname{SNR}(\bu^{\rm tr}),
    \qquad
    Z\sim\mathcal N(0,1).
\]
Therefore, the rejection probability is approximated by
\[
\begin{aligned}
    \Pr\left(
        \widehat Z_{\rm ZF}>z_{1-\alpha}
    \right)
    &\approx
    \Pr\left(
        Z+
        \sqrt N\,
        \operatorname{SNR}(\bu^{\rm tr})
        >
        z_{1-\alpha}
    \right)\\
    &=
    1-\Phi\left(
        z_{1-\alpha}
        -
        \sqrt N\,
        \operatorname{SNR}(\bu^{\rm tr})
    \right),
\end{aligned}
\]
which gives the stated Gaussian power approximation.

We next prove consistency directly, without relying on this
approximation. Suppose that $T(P,Q;\bu^{\rm tr})>0$.
Since $0<\sigma^2<\infty$, this is equivalent to
$\operatorname{SNR}(\bu^{\rm tr})>0$. The centered term \eqref{eq:centered_stats} is $O_p(1)$ by
\Cref{thm:zf2st_asymp}. Meanwhile,
\[
\begin{aligned}
    \sqrt N\,
    \frac{T}{\widehat\sigma_{\rm te}}
    &=
    \sqrt N\,
    \frac{T}{\sigma}
    \frac{\sigma}{\widehat\sigma_{\rm te}}\\
    &=
    \sqrt N\,
    \operatorname{SNR}(\bu^{\rm tr})
    \frac{\sigma}{\widehat\sigma_{\rm te}}
    \overset{p}{\longrightarrow}
    +\infty,
\end{aligned}
\]
because
$\sigma/\widehat\sigma_{\rm te}\overset{p}{\longrightarrow}1$ and
$\operatorname{SNR}(\bu^{\rm tr})>0$ is fixed. Hence,
\[
    \widehat Z_{\rm ZF}
    \overset{p}{\longrightarrow}
    +\infty.
\]
Since $z_{1-\alpha}$ is fixed,
\[
    \Pr\left(
        \widehat Z_{\rm ZF}>z_{1-\alpha}
       \right)
    \longrightarrow 1.
\]
Thus, conditional on the training split, any fixed learned witness with
positive population alignment yields a consistent ZF2ST as
$N_{\rm te}\to\infty$.
\end{proof}

\section{Discussion: Pitfalls of Empirical ZFD} \label{app:pitfalls}
A natural estimator of the population ZFD is the empirical version
\[
    \widehat{\mathrm{ZFD}} (P,Q)
    =
    \frac1n
    \sum_{i=1}^n
    \|\widehat \bv(M_i)\|^2,
\]
where \(\widehat \bv\) is a learned estimator of the population midpoint velocity
\[
    \bv(\bm)=\mathbb E[D\mid M=\bm].
\]
Although this estimator is natural for estimating the population energy
\(\mathbb E\|\bv(M)\|^2\), it is less suitable as a test statistic. The reason is
that the statistic squares the learned witness and therefore converts
estimation error into a positive null contribution.

To see this, write
\[
    \widehat \bv=\bv+\be,
\]
where \(\be\) denotes the witness estimation error. Then, up to the empirical
evaluation error,
\[
    \widehat{\mathbb E}\|\widehat \bv(M)\|^2
    -
    \widehat{\mathbb E}\|\bv(M)\|^2
    =
    2\widehat{\mathbb E}\langle \bv(M),\be(M)\rangle
    +
    \widehat{\mathbb E}\|\be(M)\|^2.
\]
Under a fixed alternative, the first-order term may contribute because
\(\bv\neq0\). However, under the null hypothesis \(H_0:P=Q\), the zero-flow
criterion gives \(\bv\equiv0\). Hence
\[
    \widehat{\mathrm{ZFD}}(P,Q)
    =
    \widehat{\mathbb E}\|\widehat \bv(M)\|^2 = \widehat{\mathbb E}\|\be(M)\|^2
\]
so the statistic is driven entirely by the squared estimation noise of the learned velocity. Its null distribution therefore depends on the regressor class, optimization procedure, regularization, output dimension, and sample size. This makes the empirical ZFD difficult to calibrate without resampling or problem-specific analysis. Even when permutation calibration is used, the positive estimation-noise contribution is included in the null distribution, which can inflate the critical value and reduce power.

ZF2ST avoids this self-interaction by evaluating the learned witness linearly
on an independent held-out split. Given a witness
\( \bu^{\rm tr}\) learned from the training split, we compute $\widehat T_{\rm te}$.
Conditional on the training split, \( \bu^{\rm tr}\) is fixed and independent of the evaluation samples. Under \(H_0\), $\bv \equiv \bzero$,
so for any learned witness $\bu^{\rm tr}$, $T(P,Q;\bu^{\rm tr}) \equiv 0$.
Thus the population alignment is zero, and the held-out empirical alignment
\(\widehat T_{\rm te}\) is null-centered conditional on the training split.
The witness quality affects the power through its alignment
with the population velocity, but not the null centering. This is the main
reason we use the sample-split linear ZF2ST statistic instead of the empirical ZFD estimator.

\section{ZF2ST Additional Details}

\subsection{Training Details: Dynamic pairing} \label{app:dyn_pair}


\begin{figure}
    \centering
    \includegraphics[width=0.85\linewidth]{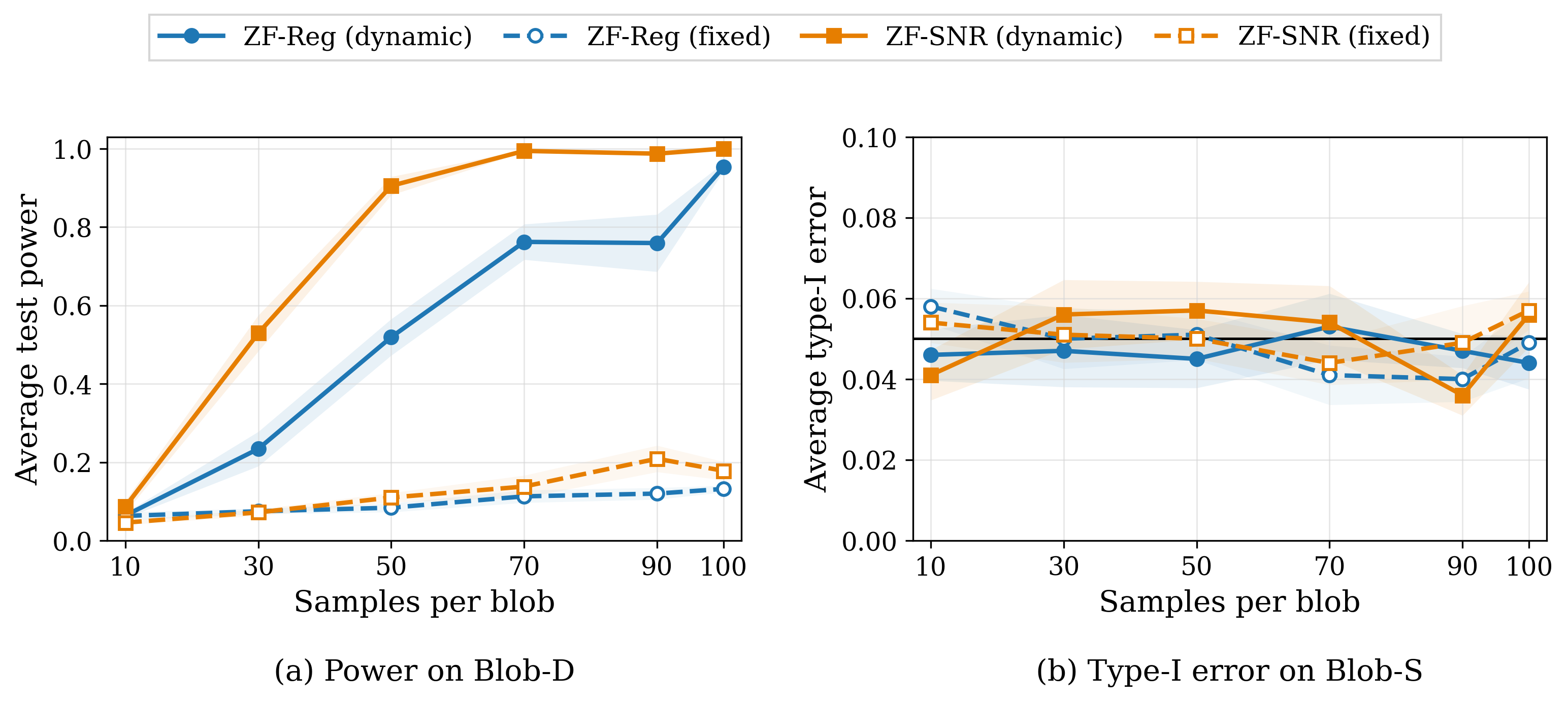}
    \caption{
        Fixed and dynamic pairing ablation on the Blob benchmarks.
        \textbf{Left:} average test power on Blob-D as the number of
        samples per blob varies.
        \textbf{Right:} average type-I error on Blob-S, with the
        horizontal line indicating the nominal level
        $\alpha=0.05$.
        Solid curves denote dynamic pairing, dashed curves denote
        fixed pairing, and shaded regions show one standard error
        across independent training trials.
    }
    \label{fig:pairing_ablation}
\end{figure}

Given the training folds $S_X^{\rm tr}$ and $S_Y^{\rm tr}$, a fixed
one-to-one pairing produces only one collection of
midpoint--displacement tuples,
\[
    \bM^{\rm tr}
    =
    \frac{
        \bX^{\rm tr}
        +
        \bY^{\rm tr}
    }{2},
    \qquad
    \bD^{\rm tr}
    =
    \bY^{\rm tr}
    -
    \bX^{\rm tr},
\]
where $\bX^{\rmtr} = (\bx_1^{\rmtr}, \ldots, \bx^{\rmtr}_{N_{\rmtr}})$ and $\bY^{\rmtr} = (\by_1^{\rmtr}, \ldots, \by^{\rmtr}_{N_{\rmtr}})$.
This construction repeatedly exposes the witness to the same
$N_{\rm tr}$ cross-pairs and may therefore provide limited supervision;
see \Cref{alg:fixed_pairing}.

We therefore use dynamic random pairing as a lightweight training-time augmentation.
At the beginning of epoch $e$, we draw a random permutation $\pi^e$ of $\{1,\ldots,N_{\rm tr}\}$ and construct
\[
    \bM^{{\rm tr}}_e
    =
    \frac{
        \bX^{\rm tr}
        +
        \bY_{\pi^e}^{\rm tr}
    }{2},
    \qquad
    \bD^{{\rm tr}}_e
    =
    \bY_{\pi^e}^{\rm tr}
    -
    \bX^{\rm tr}.
\]
By refreshing the pairing across epochs, the witness observes a more
diverse collection of cross-paired tuples from the empirical product
coupling, rather than being restricted to a single fixed matching. 
The procedure is summarized in \Cref{alg:dynamic_pairing}.

As shown in \Cref{fig:pairing_ablation}, dynamic pairing substantially
improves power on Blob-D for both witness-learning objectives,
particularly ZF-SNR. At $100$ samples per blob, the dynamically paired
variants achieve power close to one, whereas their fixed-pair
counterparts remain below $0.2$. Meanwhile, all variants exhibit
comparable type-I error on Blob-S and remain close to the nominal level.
These results suggest that the power gain arises from more effective
witness learning and improved generalization to unseen evaluation pairs,
rather than from inflated false-positive rates.


Note that the \textbf{dynamic pairing is used only during training.}
On the evaluation split, we form one-to-one fixed pairs and compute 
\[
    t_i^{\rm te}
    =
    \left\langle
        \bu^{\rm tr}(M_i^{\rm te}),
        D_i^{\rm te}
    \right\rangle,
    \qquad M_i^{\mathrm{te}}=
    \frac{X_i^{\mathrm{te}}+Y_i^{\mathrm{te}}}{2},
    \qquad
    D_i^{\mathrm{te}}
    =
    Y_i^{\mathrm{te}}-X_i^{\mathrm{te}},
    \qquad i=1,\ldots,N_{\rm te}.
\]
Conditionally on the learned witness, these held-out scores are
independent, so the final testing statistic admits the CLT calibration in
\Cref{thm:zf2st_asymp}.
Using all evaluation cross-pairs would instead introduce dependence
and lead to a two-sample U/V-statistic-type variance structure
\citep[Ch.~5]{serfling2009approximation}.
Thus, cross-pairing serves \textbf{only} as a training augmentation, while ZF2ST retains fixed one-to-one pairing during testing.

\begin{center}
\begin{minipage}[t]{0.48\linewidth}
\begin{algorithm}[H]
\caption{Training with Fixed Pairing}
\label{alg:fixed_pairing}
\begin{algorithmic}[1]
\Require \(S_X^{\rm tr}, S_Y^{\rm tr}\), epochs \(E\)

\State
$\displaystyle
\bM^{\rm tr}
\gets
\tfrac{\bX^{\rm tr}+\bY^{\rm tr}}{2},
\bD^{\rm tr}
\gets
\bY^{\rm tr}-\bX^{\rm tr}
$
\For{$e=1,\ldots,E$}
    \For{each mini-batch
    $(\bM_{\mathcal B}^{\rm tr},
      \bD_{\mathcal B}^{\rm tr})$}
        \State Update $\bu^{\rm tr}$ using
        \eqref{eq:u_reg} or
        \eqref{eq:power_oriented_witness}
    \EndFor
\EndFor

\State \Return $\bu^{\rm tr}$
\end{algorithmic}
\end{algorithm}
\end{minipage}
\hfill
\begin{minipage}[t]{0.48\linewidth}
\begin{algorithm}[H]
\caption{Training with Dynamic Pairing}
\label{alg:dynamic_pairing}
\begin{algorithmic}[1]
\Require \(S_X^{\rm tr}, S_Y^{\rm tr}\), epochs \(E\)

\For{$e=1,\ldots,E$}
    \State
    $\pi_e
    \gets
    \operatorname{Permute}
    \{1,\ldots,N_{\rm tr}\}$
    \State $\bM_e^{\rm tr}
            \gets
            \tfrac{
                \bX^{\rm tr}
                +
                \bY_{\pi_e}^{\rm tr}
            }{2},
            \bD_e^{\rm tr}
            \gets
            \bY_{\pi_e}^{\rm tr}
            -
            \bX^{\rm tr}
            $

    \For{each mini-batch
    $(\bM_{e,\mathcal B}^{\rm tr},
      \bD_{e,\mathcal B}^{\rm tr})$}
        \State Update $\bu^{\rm tr}$ using
        \eqref{eq:u_reg} or
        \eqref{eq:power_oriented_witness}
    \EndFor
\EndFor

\State \Return $\bu^{\rm tr}$
\end{algorithmic}
\end{algorithm}
\end{minipage}
\end{center}

\subsection{Testing Details: Sign-Flip Calibration}
\label{app:signflip}

The asymptotic result in \Cref{thm:zf2st_asymp} permits Gaussian
calibration when the evaluation sample is sufficiently large.
In the experiments, we instead use a paired sign-flip randomization test.
Although sign flipping is a classical finite-group randomization
procedure \citep{hemerik2018exact}, it is particularly efficient for
ZF2ST because the held-out alignment scores are antisymmetric under
within-pair exchange. Unlike the classic pooled-label calibration,
which generally changes the evaluation pairs after each
relabeling, sign flipping reuses the pre-computed witness scores.

\textbf{Sign-flip statistic.} Condition on the training split, so that the learned witness
$\bu^{\rm tr}$ is fixed. For the fixed evaluation pairs, define
\[
    t_i^{\rm te}
    =
    \left\langle
        \bu^{\rm tr}(M_i^{\rm te}),
        D_i^{\rm te}
    \right\rangle,
    \qquad
    i=1,\ldots,N_{\rm te}.
\]
The observed statistic is
\[
    \widehat Z_{\rm ZF}
    =
    \frac{
        \sqrt{N_{\rm te}}\,
        \widehat T_{\rm te}
    }{
        \widehat\sigma_{\rm te}
    },
\]
where
\[
    \widehat T_{\rm te}
    =
    \frac{1}{N_{\rm te}}
    \sum_{i=1}^{N_{\rm te}}t_i^{\rm te},
    \qquad
    \widehat\sigma_{\rm te}^2
    =
    \frac{1}{N_{\rm te}-1}
    \sum_{i=1}^{N_{\rm te}}
    \left(
        t_i^{\rm te}-\widehat T_{\rm te}
    \right)^2.
\]

For the $b$-th randomization, draw a Rademacher vector
\[
    \boldsymbol{\varepsilon}^{(b)}
    \sim
    \operatorname{Unif}
    \bigl(\{-1,+1\}^{N_{\rm te}}\bigr),
\]
and construct the sign-flipped score vector
\[
    \bt^{{\rm te},(b)}
    =
    \boldsymbol{\varepsilon}^{(b)}
    \odot
    \bt^{\rm te},
\]
where $\bt^{\rm te} =
    \left(
        t_1^{\rm te},\ldots,t_{N_{\rm te}}^{\rm te}
    \right)$.
The randomized statistic is
\begin{equation} \label{eq:sf_zfstats}
    \widehat Z_{\rm ZF}^{(b)}
    =
    \frac{
        \sqrt{N_{\rm te}}\,
        \widehat T_{\rm te}^{(b)}
    }{
        \widehat\sigma_{\rm te}^{(b)}
    },
\end{equation}
where
\[
    \widehat T_{\rm te}^{(b)}
    =
    \frac{1}{N_{\rm te}}
    \sum_{i=1}^{N_{\rm te}}
    t_i^{{\rm te},(b)},
    \qquad
    \left(\widehat\sigma_{\rm te}^{(b)}\right)^2
    =
    \frac{1}{N_{\rm te}-1}
    \sum_{i=1}^{N_{\rm te}}
    \left(
        t_i^{{\rm te},(b)}
        -
        \widehat T_{\rm te}^{(b)}
    \right)^2.
\]
Using $B$ random sign configurations, the one-sided
$p$-value is
\begin{equation} \label{eq:perm_pval}
    p_{\rm val}^{\rm sf}
    =
    \frac{
        1+
        \sum_{b=1}^{B}
        \mathbf 1
        \left\{
            \widehat Z_{\rm ZF}^{(b)}
            \geq
            \widehat Z_{\rm ZF}
        \right\}
    }{
        B+1
    },
\end{equation}
and $H_0$ is rejected whenever
$p_{\rm val}^{\rm sf}\leq\alpha$
\citep{phipson2016permutation}.

We summarize the proposed calibration into \Cref{alg:sf_perm}. The following proposition guarantees the validity of the sign-flip calibration.
\begin{proposition}[Finite-sample validity of sign-flip calibration]
\label{prop:signflip_validity}
Suppose that $\bu^{\rmtr}$ is fixed and $S_X^{\rmte} \overset{\rm iid}{\sim} P$, $S_Y^{\rmte} \overset{\rm iid}{\sim} Q$ are independent.
Under $H_0:P=Q$, the  exhaustive sign-flip test (\Cref{alg:sf_perm}) based on $\widehat Z_{\rm ZF}$ controls the
finite-sample type-I error at level $\alpha$. The same conclusion
holds for the corrected Monte Carlo $p$-value.
\end{proposition}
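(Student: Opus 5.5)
The argument is the standard sign-symmetry (group invariance) argument for randomization tests, applied conditionally on the training split so that $\bu^{\rmtr}$ is a fixed deterministic field.

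\textbf{Step 1: sign-symmetry of the scores.} Under $H_0:P=Q$, each held-out pair $(X_i^{\rmte},Y_i^{\rmte})$ is i.i.d.\ from $P\otimes P$, so it is exchangeable: $(X_i^{\rmte},Y_i^{\rmte})\overset{d}{=}(Y_i^{\rmte},X_i^{\rmte})$. Swapping the two points leaves the midpoint $M_i^{\rmte}$ unchanged and replaces $D_i^{\rmte}$ by $-D_i^{\rmte}$. Hence $t_i^{\rmte}=\langle\bu^{\rmtr}(M_i^{\rmte}),D_i^{\rmte}\rangle$ is mapped to $-t_i^{\rmte}$, which gives $t_i^{\rmte}\overset{d}{=}-t_i^{\rmte}$. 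Since the pairs are mutually independent, for every fixed $\boldsymbol\varepsilon\in\{-1,+1\}^{N_{\rmte}}$ we obtain $\boldsymbol\varepsilon\odot\bt^{\rmte}\overset{d}{=}\bt^{\rmte}$. This is exactly where $H_0$ is used. Note that it needs only exchangeability within pairs, not \Cref{prop:zf_criterion}.

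\textbf{Step 2: exhaustive test.} Let $G=\{-1,+1\}^{N_{\rmte}}$ act on score vectors, and write $\widehat Z_{\rm ZF}=f(\bt^{\rmte})$, where $f$ is the studentized mean. I would use the standard group-invariance argument (e.g.\ \citet{hemerik2018exact}). Let $\boldsymbol\varepsilon_0$ be uniform on $G$ and independent of the data. By Step 1, $\bt^{\rmte}\overset{d}{=}\boldsymbol\varepsilon_0\odot\bt^{\rmte}$. The multiset $\{f(\boldsymbol\varepsilon\odot\bt):\boldsymbol\varepsilon\in G\}$ is invariant under $\bt\mapsto\boldsymbol\varepsilon_0\odot\bt$, because $G$ is a group. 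Therefore, conditional on the orbit, the observed statistic is a uniformly random element of this multiset, and the rank of $\widehat Z_{\rm ZF}$ within it is (super-)uniform. Consequently the exhaustive $p$-value
\[
p=\frac{1}{|G|}\sum_{\boldsymbol\varepsilon\in G}\mathbf 1\{f(\boldsymbol\varepsilon\odot\bt^{\rmte})\ge f(\bt^{\rmte})\}
\]
satisfies $\Pr(p\le\alpha)\le\alpha$. Ties only make $p$ larger, since the comparison uses $\ge$.

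\textbf{Step 3: Monte Carlo version.} Let $\boldsymbol\varepsilon^{(1)},\dots,\boldsymbol\varepsilon^{(B)}$ be i.i.d.\ uniform on $G$. The $B+1$ vectors $\bt^{\rmte},\boldsymbol\varepsilon^{(1)}\odot\bt^{\rmte},\dots,\boldsymbol\varepsilon^{(B)}\odot\bt^{\rmte}$ are exchangeable under $H_0$. To see this, write $\bt^{\rmte}=\boldsymbol\varepsilon_0\odot\bt'$, where $\bt'$ has the same law and $\boldsymbol\varepsilon_0$ is uniform and independent (valid by Step 1). The products $\boldsymbol\varepsilon^{(b)}\boldsymbol\varepsilon_0$, together with $\boldsymbol\varepsilon_0$ itself, are i.i.d.\ uniform on $G$ given $\bt'$. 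Hence the $B+1$ statistics $f(\cdot)$ are exchangeable, and the standard rank argument \citep{phipson2016permutation} shows that $p^{\rm sf}_{\rm val}$ in \eqref{eq:perm_pval} satisfies $\Pr(p^{\rm sf}_{\rm val}\le\alpha)\le\alpha$.

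Finally, take expectations over the training split. Because the training split is independent of the test fold, the conditional guarantee becomes unconditional.

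\textbf{Main obstacle.} The step needing most care is the degeneracy of $f$ when $\widehat\sigma^{(b)}_{\rm te}=0$, for instance when all $|t_i|$ are equal and the signs align, or when $\bt=\bzero$. I would fix any deterministic convention for such cases, such as $f=0$ or $+\infty$ applied identically to the observed and flipped vectors. The argument only requires $f$ to be a fixed measurable function applied equally to all group elements, so validity holds under any such convention.
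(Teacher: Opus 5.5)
Your proposal is correct and follows essentially the same route as the paper. In both, the key step is that under $H_0$ a within-pair swap fixes $M_i^{\rmte}$ and negates $D_i^{\rmte}$, so independence across pairs gives $\boldsymbol\varepsilon\odot\bt^{\rmte}\overset{d}{=}\bt^{\rmte}$ for every fixed sign vector. The only difference is that the paper then cites Theorems~1 and~2 of \citet{hemerik2018exact} for the exhaustive and add-one Monte Carlo versions, whereas you write out those group-invariance and exchangeability arguments yourself; you also make explicit the convention for degenerate $\widehat\sigma_{\rmte}=0$ and the unconditioning over the training split, which the paper leaves implicit.
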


\begin{proof}
For $\varepsilon \in \{-1, +1\}$, define the pairwise transformation
\[
\operatorname{swap}_{\varepsilon}(X,Y) = \begin{cases}
    (X, Y), \, \varepsilon = +1 \\
    (Y, X), \, \varepsilon = -1
\end{cases}
\]
For a sign vector $\boldsymbol{\varepsilon}\in\{-1,+1\}^{N_{\rm te}}$, let $\operatorname{swap}_{\boldsymbol{\varepsilon}}$ apply $\operatorname{swap}_{{\varepsilon}_i}$ independently to the \(i\)-th evaluation pair.

Under \(H_0:P=Q\), each pair
\((X_i^{\rm te},Y_i^{\rm te})\) is exchangeable. Since the evaluation
pairs are independent, for every
\(\boldsymbol{\varepsilon}\in\{-1,+1\}^{N_{\rm te}}\),
\[
\operatorname{swap}_{\boldsymbol{\varepsilon}}
\left(
    \{(X_i^{\rm te},Y_i^{\rm te})\}_{i=1}^{N_{\rm te}}
\right)
\overset{d}{=}
\{(X_i^{\rm te},Y_i^{\rm te})\}_{i=1}^{N_{\rm te}}.
\]
For each transformed pair,
\[
    M_i^{\rm te}\mapsto M_i^{\rm te},
    \qquad
    D_i^{\rm te}\mapsto\varepsilon_iD_i^{\rm te},
\]
and therefore
\[
    t_i^{\rm te}
    =
    \left\langle
        \bu^{\rm tr}(M_i^{\rm te}),D_i^{\rm te}
    \right\rangle
    \mapsto
    \varepsilon_i t_i^{\rm te}.
\]
Thus, applying $\operatorname{swap}_{\boldsymbol{\varepsilon}}$ to the evaluation data is
exactly equivalent to replacing the score vector by
\[
    \bt^{\rm te}
    \mapsto
    \boldsymbol{\varepsilon}\odot\bt^{\rm te},
\]
The sign-flip procedure is therefore the finite-group randomization test
induced by independent within-pair swaps. Finite-sample validity of the
exhaustive test follows from
\citet[Theorem~1]{hemerik2018exact}. When only $B$ sign vectors are
sampled and the observed configuration is included through the
corrected Monte Carlo $p$-value, level control follows from
\citet[Theorem~2]{hemerik2018exact}.
\end{proof}

\begin{algorithm}[t]
\caption{Sign-Flip Calibration for ZF2ST}
\label{alg:sf_perm}
\begin{algorithmic}[1]
\Require Trained witness $\bu^{\rm tr}$, $S_X^{\rm te},S_Y^{\rm te}$,
number of randomizations $B$

\State Compute $\bt^{\rm te}$ by evaluating $\bu^{\rm tr}$ using \eqref{eq:linear_zfd_stat}
\Comment{One witness-evaluation pass}
\State Compute $\widehat Z_{\rm ZF}$ by studentizing $\bt^{\rm te}$

\For{$b=1,\ldots,B$}
    \State Draw
    $\boldsymbol{\varepsilon}^{(b)}
    \sim
    \operatorname{Unif}
    \bigl(\{-1,+1\}^{N_{\rm te}}\bigr)$
    \State
    $\bt^{{\rm te},(b)}
    \gets
    \boldsymbol{\varepsilon}^{(b)}
    \odot
    \bt^{\rm te}$
    \Comment{No additional witness evaluations}
    \State Compute $\widehat Z_{\rm ZF}^{(b)}$
    by studentizing $\bt^{{\rm te},(b)}$
\EndFor

\State Compute $p_{\rm val}^{\rm sf}$ using \eqref{eq:perm_pval}
\State \Return $p_{\rm val}^{\rm sf}$
\end{algorithmic}
\end{algorithm}

\subsection{Complexity comparison} \label{app:complexity}
Let $b$ denote the mini-batch size, $N_{\rm te}$ the number of
testing samples per distribution, and $B$ the number of
randomizations. Let $C_u$, $C_\phi$, and $C_f$ denote the per-sample
costs of processing the ZF2ST witness, the MMD-D feature map, and the
C2ST classifier, respectively, and let $C_k$ denote the cost of
evaluating one kernel interaction. We suppress the fixed data
dimension and constant factors.

\textbf{Training complexity.}
For ZF2ST, an update on $b$ midpoint-displacement
pairs requires one evaluation and backpropagation through the witness and only elementwise pairing and loss computations. Its cost is therefore $\mathcal O(bC_u)$.
Dynamic pairing additionally generates one random permutation and
reconstructs the paired tuples once per epoch, requiring
$\mathcal O(N_{\rm tr})$ operations and leaving the overall training
order unchanged.

The full-kernel MMD-O objective evaluates $\mathcal O(b^2)$ pairwise
kernel terms per update, giving $\mathcal O(b^2C_k)$.
For MMD-D, each update additionally computes the learned feature
representation, resulting in $\mathcal O(bC_\phi+b^2C_k)$.

C2ST uses ordinary mini-batch classification and has per-update cost
$\mathcal O(bC_f)$.
For $U$ optimization updates, each of these costs is multiplied by $U$. 

\textbf{Calibration complexity.}
ZF2ST evaluates the witness once to obtain the $N_{\rmte}$ held-out alignment
scores. Paired sign-flip calibration then requires only signed
reductions, giving $\mathcal O(N_{\rmte}C_u+BN_{\rmte})$.
By comparison, a direct pooled-label implementation for ZF2ST
generally changes the evaluation pairs and their midpoints after each
relabeling, requiring $\mathcal O\!\left(B[N_{\rmte}+N_{\rmte}C_u]\right)$.

For the standard full-kernel MMD implementations, constructing the
evaluation Gram matrix requires $\mathcal O(N_{\rmte}^2C_k)$ operations, and
each of the $B$ pooled-label randomizations aggregates quadratic
kernel interactions. The resulting calibration costs are $\mathcal O(N_{\rmte}^2C_k+BN_{\rmte}^2)$
for MMD-O and $\mathcal O(N_{\rmte}C_\phi+N_{\rmte}^2C_k+BN_{\rmte}^2)$
for MMD-D. C2ST evaluates the classifier once and reuses the resulting
predictions or logits across label randomizations, giving $\mathcal O(N_{\rmte}C_f+BN_{\rmte})$.

The leading-order costs are summarized in
\Cref{tab:complexity}. Gaussian calibration further reduces the ZF2ST
testing cost to $\mathcal O(N_{\rmte}C_u+N_{\rmte})$ by removing the randomization
term.

\begin{table}[t]
    \centering
    \small
    \caption{
        Leading-order computational costs for a mini-batch of size
        $b$, $n$ evaluation samples per distribution, and $B$
        randomizations. The table describes the standard full-kernel
        MMD implementations considered in our experiments.
    }
    \begin{tabular}{lcc}
        \toprule
        Method
        & Training per update
        & Test and randomization calibration \\
        \midrule
        ZF2ST \tiny (Proposed)
        & $\mathcal O(bC_u)$
        & $\mathcal O(N_{\rm te}C_u+BN_{\rm te})$ \\

        MMD-O \tiny \citep{sutherland2016generative}
        & $\mathcal O(b^2C_k)$
        & $\mathcal O(N_{\rm te}^2C_k+BN_{\rm te}^2)$ \\

        MMD-D \tiny \citep{liu2020learning}
        & $\mathcal O(bC_\phi+b^2C_k)$
        & $\mathcal O(N_{\rm te}C_\phi+N_{\rm te}^2C_k+BN_{\rm te}^2)$ \\

        C2ST \tiny \citep{lopez2016revisiting}
        & $\mathcal O(bC_f)$
        & $\mathcal O(N_{\rm te}C_f+BN_{\rm te})$ \\
        \bottomrule
    \end{tabular}
    \label{tab:complexity}
\end{table}

\section{Experiment Details}
\label{app:exp_details}

\subsection{Evaluation Protocol}
\label{app:evaluation_protocol}

We compare ZF-Reg, ZF-SNR, MMD-O, MMD-D, C2ST-S, and C2ST-L.
For synthetic benchmarks, each repetition uses an independently
generated training sample and fresh evaluation samples. For MNIST,
models are fitted using the official training split and evaluated using
the official test split. All learned components are fitted exclusively
on the training data and frozen before testing.

We use $K=10$ independent training repetitions and evaluate each fitted model on $R=100$ testing sets.
Tests are conducted at level $\alpha=0.05$ using $B=200$ permutations. We report empirical power under $H_1$ and type-I error under $H_0$.

ZF2ST is calibrated using the paired sign-flip procedure in
\Cref{app:signflip}. MMD-O and MMD-D use pooled-label permutations.
C2ST-S and C2ST-L share the same cross-entropy-trained classifier and
differ only in their held-out statistics; their classifier outputs are
computed once and reused during pooled-label calibration. All Monte
Carlo $p$-values use the add-one correction \cite{phipson2016permutation}.

\subsection{Architectures and Optimization}
\label{app:model_config}

All neural methods use multilayer perceptrons with four hidden affine
layers and a final feature layer. MMD-D uses the feature output
directly, C2ST appends a two-class classification head, and ZF2ST
appends a vector-field head whose output dimension equals the input
dimension. C2ST-S and C2ST-L share the same fitted classifier within
each trial. Architectures and optimization settings are summarized in
\Cref{tab:experiment-architectures,tab:experiment-optimization}.

\begin{table}[t]
    \centering
    \small
    \caption{Neural architectures used in the three benchmarks.}
    \label{tab:experiment-architectures}
    \begin{tabular}{lllll}
        \hline
        Benchmark & Methods & $d_{\mathrm{in}}$ & $h=h_f$ & Activation \\
        \hline
        Blob & ZF-Reg, ZF-SNR & 2 & 100 & SiLU \\
        Blob & MMD-D, C2ST & 2 & 50 & Softplus \\
        HDGM & all neural methods & $d$ & $3d$ & Softplus \\
        MNIST & all neural methods & 784 & 60 & Softplus \\
        \hline
    \end{tabular}
\end{table}

\begin{table*}[t]
    \centering
    \small
    \caption{Optimization hyperparameters.  Here 
    $n_b$ is the nominal number of observations per Blob
    component, and $b=\min(2n_b,128)$.}
    \label{tab:experiment-optimization}
    \begin{tabular}{llllll}
        \hline
        Benchmark & Method & Epochs & Batch size & Learning rate
        & Other \\
        \hline
        Blob & ZF-Reg & 15,000 & full & $10^{-3}$ & -- \\
        Blob & ZF-SNR & 15,000 & full & $10^{-3}$
             & $\lambda=10^{-3}$ \\
        Blob & MMD-O & 3,000 & full & $5\times10^{-4}$ & -- \\
        Blob & MMD-D & 1,000 & full & $5\times10^{-4}$ & -- \\
        Blob & C2ST-S/L
             & $\lfloor 500(18n_b)/b\rfloor$
             & $b$ & $10^{-3}$ & shared classifier \\
        \hline
        HDGM & ZF-Reg & 1,000 & full & $10^{-3}$ & -- \\
        HDGM & ZF-SNR & 1,000 & full & $10^{-3}$
             & $\lambda=10^{-3}$ \\
        HDGM & MMD-O & 2,000 & full & $10^{-3}$ & -- \\
        HDGM & MMD-D & 1,000 & full & $5\times10^{-5}$ & -- \\
        HDGM & C2ST-S/L & 1,000 & $\min(N^{\rmtr},128)$
             & $10^{-3}$ & shared classifier \\
        \hline
        MNIST & ZF-Reg & 1,500 & full & $5\times10^{-3}$ & -- \\
        MNIST & ZF-SNR & 1,500 & full & $5\times10^{-4}$
                   & $\lambda=10^{-3}$ \\
        MNIST  & MMD-O & 1,500 & full & $5\times10^{-4}$ & -- \\
        MNIST  & MMD-D & 1,500 & full & $5\times10^{-4}$ & -- \\
        MNIST  & C2ST-S/L & 1,500 & 100
                   & $10^{-3}$ & shared classifier \\
        \hline
    \end{tabular}
\end{table*}

All neural models are optimized using Adam without a learning-rate
scheduler or explicit weight decay. ZF2ST uses dynamic random pairing
during training and fixed one-to-one pairing during evaluation.

MMD-O uses the Gaussian kernel
\[
    k(\bx,\by)
    =
    \exp\left(
        -\frac{\|\bx-\by\|^2}{\sigma_0}
    \right)
\]
and optimizes $\sigma_0$. MMD-D uses
\[
\begin{aligned}
    k_\theta(\bx,\by)
    =
    (1-\epsilon)
    \exp\left(
        -\frac{\|f_\theta(\bx)-f_\theta(\by)\|^2}{\sigma_0}
        -\frac{\|\bx-\by\|^2}{\sigma}
    \right) +
    \epsilon
    \exp\left(
        -\frac{\|\bx-\by\|^2}{\sigma}
    \right),
\end{aligned}
\]
and optimizes $\sigma_0, \sigma$ and $\epsilon$.
We follow the dataset-specific initialization and optimization settings
of the released MMD-D implementation.

\subsection{Benchmarks}
\label{app:datasets}

\begin{figure}[t]
    \centering
    \includegraphics[width=0.49\linewidth]{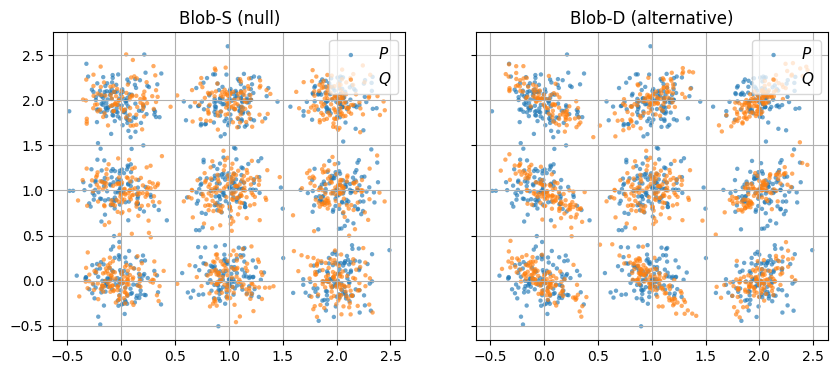}
    \includegraphics[width=0.49\linewidth]{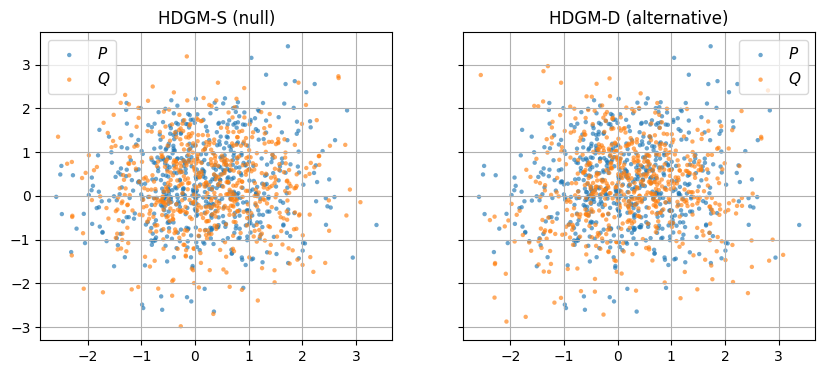}
    \caption{Visualization of synthetic datasets. \textbf{Left:} Blob-S/D; \textbf{Right:} The first two dimensions of HDGM-S/D}
    \label{fig:vis_syn_dataset}
\end{figure}

We visualize synthetic datasets in \Cref{fig:vis_syn_dataset}.

\paragraph{Blob-S/D.}
The Blob distributions are two-dimensional mixtures centered on the
$3\times3$ grid $\{0,1,2\}^2$. We use
\[
    n_b\in\{10,30,50,70,90,100\},
    \qquad
    N_{\rmtr}= N_{\rmte}=9n_b
\]
observations per distribution, where component labels are sampled
independently and uniformly.

Under Blob-S, both distributions have covariance $0.03I_2$ in every
component. Under Blob-D, $P$ retains this covariance, whereas the
covariance of $Q$ at component $j$ is
\[
    \Sigma_j
    =
    \begin{pmatrix}
        0.03 & c_j\\
        c_j & 0.03
    \end{pmatrix},
\]
with
\[
    (c_j)_{j=1}^{9}
    =
    (-.020,-.022,-.024,-.026,0,.020,.022,.024,.026),
\]
where the grid points are ordered row by row.

\paragraph{HDGM-S/D.}
HDGM is an equally weighted two-component Gaussian mixture with means
\[
    \mu_0=0,
    \qquad
    \mu_1=0.5 \times \mathbf 1_d.
\]
Distribution $P$ has identity covariance in both components. Under
HDGM-D, the covariances of $Q$ equal $I_d$ except for
\[
    (\Sigma_0)_{12}=(\Sigma_0)_{21}=0.5,
    \qquad
    (\Sigma_1)_{12}=(\Sigma_1)_{21}=-0.5.
\]
Under HDGM-S, $Q=P$.

For the sample-size experiment, we fix $d=10$ and use
\[
    N_{\rmtr}= N_{\rmte}\in\{500,1000,2000,4000\}
\]
observations per distribution. For the dimension experiment, we fix
$N_{\rmtr}= N_{\rmte}=3000$ and use
\[
    d\in\{5,10,15,20\}.
\]

\paragraph{MNIST mixture.}
MNIST images are divided by $255$ and flattened into
$\mathbb R^{784}$. Models are trained only on the official MNIST
training split, while all hypothesis tests use the official test split.

For
\[
    N_{\rmtr}= N_{\rmte}\in\{60,100,140,180,220,260\},
\]
$P$ contains $N$ digit-6 images. Under the alternative,
\[
    Q=0.9\mu_6+0.1\mu_9,
\]
whereas under the null $Q=\mu_6$. 

\subsection{Aggregation and Uncertainty}
\label{app:aggregation}

Let $H_{jkr}\in\{0,1\}$ denote the rejection indicator at experimental
setting $j$, training repetition $k$, and evaluation repetition $r$.
For each fitted model, we first compute
\[
    \widehat\pi_{jk}
    =
    \frac1R
    \sum_{r=1}^{R}
    H_{jkr}.
\]
The reported rejection rate is the average of
$\widehat\pi_{jk}$ over the $K$ independent training repetitions.
Figure bands show one standard error across training repetitions:
\[
    \operatorname{SE}(\widehat\pi_j)
    =
    \frac{
        \operatorname{sd}
        (
            \widehat\pi_{j1},
            \ldots,
            \widehat\pi_{jK}
        )
    }{
        \sqrt K
    }.
\]



\end{document}